\newtheorem{prop}{Proposition}
\newtheorem{theorem}{Theorem}
\newtheoremstyle{case}{}{}{}{}{}{:}{ }{}
\theoremstyle{case}
\newtheorem{case}{Case}
\newcommand{\NNn}{\mathcal{N}_{n}^\rho}
\newcommand{\NNe}{\mathcal{N}^\rho}
\newcommand{\NN}[1]{\mathcal{N}\!\mathcal{N}_{n, m, #1}^\rho}
\newcommand{\reals}{\mathbb{R}}
\title{Padé Activation Units: End-to-end Learning of Flexible Activation Functions in Deep Networks}
\author{%
  Alejandro Molina$^1$, Patrick Schramowski$^1$, Kristian Kersting$^{1,2}$\\
  $^1$ AI and Machine Learning Group, CS  Department, TU Darmstadt, Germany\\
  $^2$ Centre for Cognitive Science, TU Darmstadt, Germany\\
  \texttt{\{molina,schramowski,kersting\}@cs.tu-darmstadt.de}
}
\begin{document}
\bibliographystyle{abbrvnat}

\maketitle

\begin{abstract}
The performance of deep network learning strongly depends on the choice of the non-linear activation function associated with each neuron. However, deciding on the best activation is non-trivial, and the choice depends on the architecture, hyper-parameters, and even on the dataset. Typically these activations are fixed by hand before training. Here, we demonstrate how to eliminate the reliance on first picking fixed activation functions by using flexible parametric rational functions instead. The resulting Padé Activation Units (PAUs) can both approximate common activation functions and also learn new ones while providing compact representations. Our empirical evidence shows that end-to-end learning deep networks with PAUs can increase the predictive performance. Moreover, PAUs pave the way to approximations with provable robustness.

\url{https://github.com/ml-research/pau}
\end{abstract}

\section{Introduction}
An important building block of deep learning is the non-linearities introduced by the activation functions $f(x)$. They play a major role in the success of training deep neural networks, both in terms of training time and predictive performance. Consider e.g.~Rectified Linear Unit (ReLU) due to \citet{nair2010rectified}. The demonstrated benefits in training deep networks, see e.g.~\citep{glorot2011deep}, brought renewed attention to the development of new activation functions. Since then, several ReLU variations with different properties have been introduced such as LeakyReLUs \citep{maas2013rectifier}, ELUs \citep{clevert2015fast}, RReLUs \citep{xu2015empirical}, among others. 
Another line of research, such as \citep{ramachandran2017searching} automatically searches for activation functions. It identified the Swish unit empirically as a good candidate. However, for a given dataset, there are no guarantees that Swish unit behaves well and the proposed search algorithm is computationally quite demanding.

The activation functions are traditionally fixed and, in turn, impose a set of inductive biases on the network. One attempt to relax this bias are for instance PReLUs \citep{he2015delving}, where the negative slope is subject to optimization allowing for more flexibility than other ReLU variants.
Learnable activation functions generalize this idea. They exploit parameterizations of the activation functions, adapted in an end-to-end fashion to different network architectures and datasets during  training. For instance, Maxout \citep{goodfellow2013maxout} and Mixout \citep{zhao2017improving} use a fixed set of piecewise linear components and optimized their (hyper-)parameters. Although they are theoretically universal function approximators, they heavily increase the number of parameters of the network and strongly depend on hyper-parameters such as the number of components to realize this potential.
\citet{vercellino2017hyperactivations} used a meta-learning approach for learning task-specific activation functions (hyperactivations). However, as Vercellino and Wang admit, the implementation of hyperactivations, while easy to express notationally, can be frustrating to implement for generalizability over any given activation network.
Recently, \citet{goyal2019learning} proposed a learnable activation function based on Taylor approximation and suggested a transformation strategy to avoid exploding gradients. However, relying on polynomials suffers from well-known limitations such as exploding values and a tendency to oscillate \citep{trefethen12book}. %More importantly, it constraints the network so that it is no longer a universal function approximator \citep{leshno93nonpoly}. 

As an alternative, we here introduce a learnable activation function based on the Padé approximation, i.e., rational functions. %, i.e., the ``best'' approximation of a function by a rational function of a given order. 
In contrast to approximations for high accuracy hardware implementation of the hyperbolic tangent and the sigmoid activation functions \citep{hajduk2018hardware}, we do not assume fixed coefficients. The resulting Padé Activation Units (PAU) can be learned using standard stochastic gradient and, hence, seamlessly integrated into the deep learning stack. PAUs provide more flexibility and increase the predictive performance of deep neural networks, as we demonstrate. 

We proceed as follows. We start off by introducing PAUs. Then introduce Padé networks and show that they are universal approximators. Before concluding, we present our empirical evaluation.

\section{Padé Activation Units (PAU)}
\label{sec:pau}

\begin{figure}[t]
    \begin{center}
	\includegraphics[width=\textwidth]{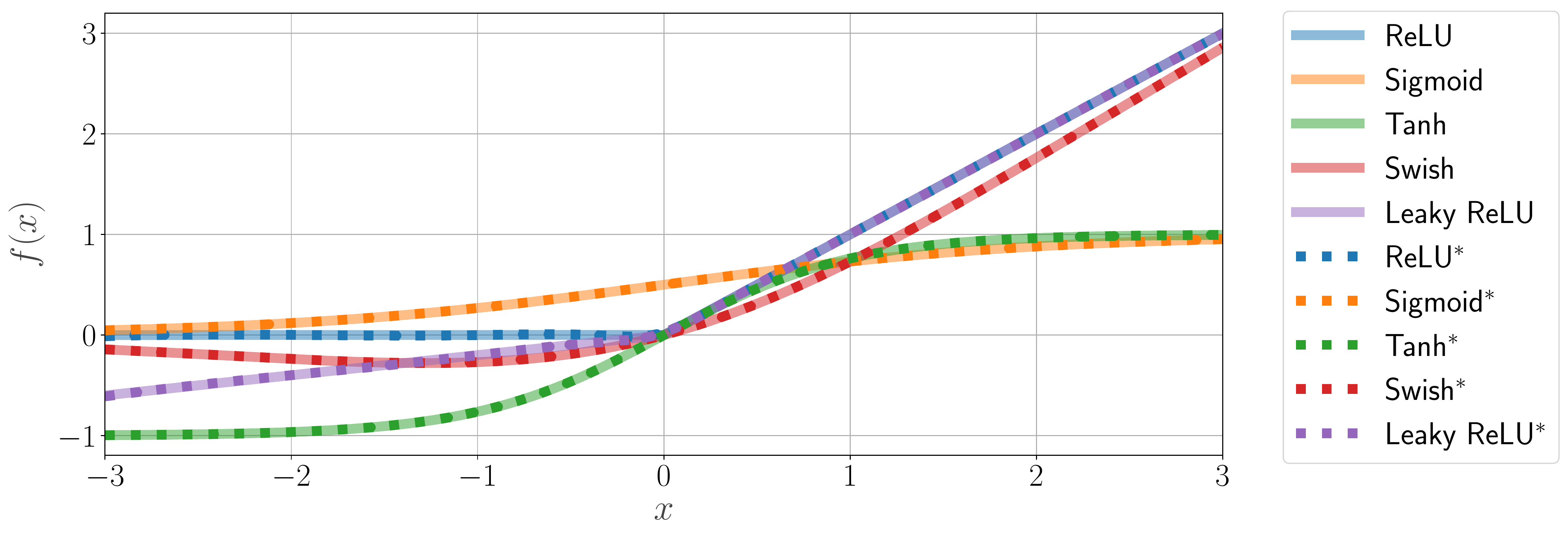}
	\end{center}
	\caption{Approximations of common activation functions (ReLU, Sigmoid, Tanh, Swish and  Leaky ReLU ($\alpha=0.20$)) using PAUs (marked with *). As one can see, PAUs can encode common activation functions very well. (Best viewed in color)\label{fig:pau_approx}}
\end{figure}

%Before considering the design of an ideal learnable activation function for neural networks, 
%Let us now develop Padé Activation Units (PAUs). 
Our starting point is the set of intuitive assumptions activation functions should ultimately fulfill shown in Tab.~\ref{tab:characteristics_activations1}.
%Ultimately, an activation function should fulfull a number of different characteristics. The it is useful to define a set of expected desirable properties that they should fulfill:
%\begin{enumerate}[(i)]
%\item They must allow the network to be universal function approximators. \label{a:universal_approx}
%\item They should ameliorate gradient vanishing. \label{a:vanishing_gradient}
%%\item They should be flexible to represent a large set of (activation) functions.
%\item They should be stable. \label{a:stable}
%\item They should be parsimonious on the number of parameters. \label{a:parsimonious}
%%\item They should be easy to use and do not significantly slow down training.
%\item They should provide networks with high predictive performance. \label{a:accuracy}
%\end{enumerate}
%\todo{add table comparing characteristics of other activation functions}
%\begin{SCtable}
\begin{table}[b!]
\centering
\begin{minipage}{0.37\textwidth}
\begin{enumerate}[(i)]
\item They must allow the networks to be universal function approximators. \label{a:universal_approx}
\item They should ameliorate gradient vanishing. \label{a:vanishing_gradient}
%\item They should be flexible to represent a large set of (activation) functions.
\item They should be stable. \label{a:stable}
\item They should be parsimonious on the number of parameters. \label{a:parsimonious}
%\item They should be easy to use and do not significantly slow down training.
\item They should provide networks with high predictive performance. \label{a:accuracy}
\end{enumerate}
\end{minipage}
\quad\quad
\begin{minipage}{0.5\textwidth}
\small
\begin{tabular}{rcccccc|}
\cline{3-7}
\multicolumn{1}{l}{} & \multicolumn{1}{l|}{} & \multicolumn{5}{c|}{\textbf{Assumptions}} \\ \cline{2-7} 
\multicolumn{1}{l|}{\textbf{Activation}} & \multicolumn{1}{l|}{Learnable} & \multicolumn{1}{r}{i} & \multicolumn{1}{r}{ii} & \multicolumn{1}{r}{iii} & \multicolumn{1}{r}{iv} & \multicolumn{1}{r|}{v} \\ \hline
\multicolumn{1}{r|}{\textbf{ReLU}} & \multicolumn{1}{c|}{N} & Y & Y & Y &  & Y \\
\multicolumn{1}{r|}{\textbf{ReLU6}} & \multicolumn{1}{c|}{N} & Y & Y & Y &  & Y \\
\multicolumn{1}{r|}{\textbf{RReLU}} & \multicolumn{1}{c|}{N} & Y & Y & Y &  & Y \\
\multicolumn{1}{r|}{\textbf{LReLU}} & \multicolumn{1}{c|}{N} & Y & Y & Y &  & Y \\
\multicolumn{1}{r|}{\textbf{ELU}} & \multicolumn{1}{c|}{N} & Y & Y & Y &  & Y \\
\multicolumn{1}{r|}{\textbf{CELU}} & \multicolumn{1}{c|}{N} & Y & Y & Y &  & Y \\
\multicolumn{1}{r|}{\textbf{Swish}} & \multicolumn{1}{c|}{N} & Y & ? & Y &  & Y \\ \hline
\multicolumn{1}{r|}{\textbf{PReLU}} & \multicolumn{1}{c|}{Y} & Y & Y & Y & Y & Y \\
\multicolumn{1}{r|}{\textbf{Maxout}} & \multicolumn{1}{c|}{Y} & Y & Y & Y & N & Y \\
\multicolumn{1}{r|}{\textbf{Mixture}} & \multicolumn{1}{c|}{Y} & Y & Y & Y & Y & Y \\
\multicolumn{1}{r|}{\textbf{APL}} & \multicolumn{1}{c|}{Y} & Y & Y & Y & Y & Y \\
\multicolumn{1}{r|}{\textbf{SReLU}} & \multicolumn{1}{c|}{Y} & Y & Y & Y & Y & Y \\
\multicolumn{1}{r|}{\textbf{SLAF}} & \multicolumn{1}{c|}{Y} & Y & Y & N & Y & ? \\ \hline
\multicolumn{1}{r|}{\textbf{PAU}} & Y & Y & Y & Y & Y & Y \\ \hline
\end{tabular}
\end{minipage}
%\end{SCtable}
\caption{(Left) The intuitive assumptions activation functions (AFs) should ultimately fulfill. (Right) Existing ELU, CELU and ReLU like AFS do not fulfill them. Only learnable AFs allow one to tune their shape at training time, ignoring hyper-parameters such as $\alpha$ for LReLU. For Swish, our experimental results do not indicate problems with vanishing gradients. SLAF \citep{goyal2019learning} showed undefined values (\ref{a:stable}), and we could not judge their performance (\ref{a:accuracy}). \label{tab:characteristics_activations1}}
\end{table}
The assumptions (\ref{a:universal_approx},\ref{a:accuracy}) concern the ability of neural networks to approximate functions. %The line of research investigating  this dates back at least to 1980s, see e.g.~\citep{hornikSW89}.
%The universal approximation theorem states that depth-2 neural networks with a suitable activation function can approximate any continuous function on a compact domain to any desired accuracy, 
% \citet{leshno93nonpoly} ruled out neural networks with polynomial activation functions from being universal approximators. The next more expressive class of functions are rational functions, i.e., a fraction of two polynomials. 
%Parametric rational functions are efficient approximators~\citep{telgarsky17}, providing us with a vital candidate for a learnable activation function.
Rational functions can fulfill assumptions (\ref{a:universal_approx},\ref{a:parsimonious}), and our experimental evaluation demonstrates that assumptions (\ref{a:vanishing_gradient},\ref{a:stable},\ref{a:accuracy}) also hold. %Thus, rational functions are a promising choice for learnable activation functions.  
%that we call the Padé Activation Unit (PAU). 
%As we show in our experimental section, PAU fulfills all the desirable properties mentioned above. \todo{fix wording?}

\subsection{Padé Approximation of Activation Functions}
Let us now formally introduce PAUs. Assume for the moment that we start with a fixed activation function $f(x)$. The Padé approximant~\citep{brezinski1994pade} is the ``best'' approximation of $f(x)$ by a rational function of given orders $m$ and $n$. Applied to typical actication functions, Fig.~\ref{fig:pau_approx} shows that they can be approximated well using rational functions.

More precisely, given $f(x)$, the Padé approximant is the rational function $F(x)$ over polynomials $P(x)$, $Q(x)$ of order $m$, $n$ of the form
\begin{equation} 
\label{eqn:pade}
F(x)= \frac{P(x)}{Q(x)}= {\frac {\sum _{j=0}^{m}a_{j}x^{j}}{1+\sum _{k=1}^{n}b_{k}x^{k}}}={\frac {a_{0}+a_{1}x+a_{2}x^{2}+\cdots +a_{m}x^{m}}{1+b_{1}x+b_{2}x^{2}+\cdots +b_{n}x^{n}}}\;,
\end{equation}
which agrees with $f(x)$ the best. The Padé approximant often gives a better approximation of a function $f(x)$ than truncating its Taylor series, and it may still work where the Taylor series does not converge. 
For these reasons, it has been used before in the context of graph convolutional networks \citep{chen2018rational}. However, they have not been considered so far for general deep networks.
Padé Activation Units (PAUs) go one step further, instead of fixing the coefficients $a_j,b_k$ to approximate a particular activation function, we allow them to be free parameters that can be optimized end-to-end with the rest of the neural network. This allows the optimization process to find the activation function needed at each layer automatically.

The flexibility of Padé is not only a blessing but might also be a curse: it can model processes that contain poles. For a learnable activation function, however, a pole may produce undefined values depending on the input as well as instabilities at learning and inference time. Therefore we consider a restriction, called \textit{safe PAU}, that guarantees that the polynomial $Q(x)$ is not 0, i.e., we avoid poles. In general, restricting $Q(x)$ implies that either $Q(x)\mapsto\mathbb{R}_{>0}$ or $Q(x)\mapsto\mathbb{R}_{<0}$, but as $P(x)\mapsto\mathbb{R}$ we can focus on $Q(x)\mapsto\mathbb{R}_{>0}$ wlog. However, as $\lim _{Q(x)\to 0^{+}}F(x)\to\infty$ learning and inference become unstable. To fix this, we impose a stronger constraint, namely $Q(x) \geq q \gg 0$. In this work, $q=1$, i.e., $\forall x: Q(x)\geq1$, preventing poles and allowing for safe computation on $\mathbb{R}$: 
\begin{equation} 
\label{eqn:pausafe}
F(x)= \frac{P(x)}{Q(x)}= {\frac {\sum _{j=0}^{m}a_{j}x^{j}}{1+|\sum _{k=1}^{n}b_{k}x^{k}|}}={\frac {a_{0}+a_{1}x+a_{2}x^{2}+\cdots +a_{m}x^{m}}{1+|b_{1}x+b_{2}x^{2}+\cdots +b_{n}x^{n}|}}\;.
\end{equation}

Other values for $q \in (0,1)$ might still be interesting, as they could provide gradient amplification due to the partial derivatives having $Q(X)$ in the denominator. However, we leave this for future work.

\subsection{Learning Safe Padé Approximations using Backpropagation}
In contrast to the standard way of fitting Padé approximants where the coefficients are found via derivatives and algebraic manipulation 
against a given function, we optimize their polynomials via backpropagation and (stochastic) gradient descent. To do this, we have to compute the gradients with respect to the parameters $\frac{\partial F}{\partial a_j}, \frac{\partial F}{\partial b_k}$ as well as the gradient for the input $\frac{\partial F}{\partial x}$. A simple alternative is to implement the forward pass as described in Eq.~\ref{eqn:pausafe}, and let automatic differentiation do the job. To be more efficient, however, we can also implement PAUs directly in CUDA (\cite{nickolls2008scalable}), and for this we need to compute the gradients ourselves:
%The gradients are straightforward to compute as:
$$\frac{\partial F}{\partial x} = \frac{\partial P(x)}{\partial x}\frac{1}{Q(x)} - \frac{\partial Q(x)}{\partial x}\frac{P(x)}{Q(x)^2}, \quad
\frac{\partial F}{\partial a_j} = \frac{x^j}{Q(x)} \quad \text{and}\quad
\frac{\partial F}{\partial b_k} = -x^k \frac{A(X)}{|A(X)|} \frac{P(X)}{Q(x)^2}\;,$$
where
$\frac{\partial P(x)}{\partial x} = a_1+2a_2x + \cdots + ma_mx^{m-1}\;$,
$\frac{\partial Q(x)}{\partial x} = \frac{A(X)}{|A(X)|}\left( b_1 + 2b_2x + \cdots + nb_nx^{n-1} \right)\;$, $A(X) = b_{1}x+b_{2}x^{2}+\cdots +b_{n}x^{n}\;,$ and
$Q(x) = 1 + |A(X)|$.
Here we reuse the expressions to reduce computations. To avoid divisions by zero when computing the gradients, we define $\frac{z}{|z|}$ as the sign of $z$. With the gradients at hand, PAUs can seamlessly be placed together with other modules onto the differentiable programming stack.

\section{Padé Networks} \label{sec:padeUA}
Having PAUs at hand, one can define Padé networks as follows:  Padé networks are feedforward networks with PAU activation functions that may include convolutional and residual architectures with pooling layers. To use Padé networks effectively, one simply replaces the standard activation functions in a neural network by PAUs and then proceed to optimize all the parameters and use the network as usual. 
However, even if every PAU contains a low number of parameters (coefficients $a_j, b_k$), in the extreme case, learning one PAU per neuron may considerably increase the complexity of the networks and in turn the learning time. To ameliorate this and inspired by the idea of weight-sharing as introduced by \citet{teh2001rate}, we propose to learn one PAU per layer. Therefore we only add $\phi$ many parameters, where $\phi=L*(m+n)$ and $L$ is the number of activation layers in the network. In our experiments we set $\phi=10L$, a rather small number of parameters (\ref{a:parsimonious}).

%However, even if every PAU contains a low number of parameters (coefficients $a_j, b_k$), in the extreme case, learning one PAU per neuron may considerably increase the complexity of the networks and in turn the learning time. To ameliorate this and triggered by the idea of weight-sharing as introduced by \citet{teh2001rate}, we propose to learn one PAU per layer. This significantly reduces the number of parameters needed, leading to a negligible overhead. 

The last step missing before we can start the optimization process is to initialize the coefficients of the PAUs. Surely, one can do random initialization of the coefficients and allow the optimizer to train the network end-to-end. However, we obtained better results after initializing all PAUs with coefficients that approximate standard activation functions. For a discussion on how to obtain different PAU coefficients, we refer to Sec.~\ref{ax:initial_coefficients}.

Before evaluating Padé Networks empirically, let us touch upon their expressivity and how to sparsify them.

\subsection{Padé Networks are Universal Function Approximators} 
%Let us touch upon the expressive power of PAUs.
A standard multi-layer perceptron (MLP) with enough hidden units and non-polynomial activation functions is a universal approximator, see e.g.~\citep{hornikSW89,leshno93nonpoly}. Padé Networks are also universal approximators.

\begin{theorem}
%\textbf{Theorem 1.}
\label{thm:theorem1}
Let $\rho \colon \reals \to \reals$ be a PAU activation function. Let $\NNe$ represent the class of neural networks with activation function $\rho$. Let $K \subseteq \reals^n$ be compact. Then $\NNe$ is dense in $C(K)$.
\end{theorem}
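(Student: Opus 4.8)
The plan is to reduce the statement to the classical Leshno--Pinkus--Schocken--Lin theorem \citep{leshno93nonpoly}, which says that a feedforward network with a single hidden layer and activation $\rho$ is dense in $C(K)$ for any compact $K$ if and only if $\rho$ is not a polynomial (more precisely, if $\rho$ is continuous and not a polynomial almost everywhere). So it suffices to verify two things about a PAU activation $\rho(x) = P(x)/Q(x)$ with $Q(x) = 1 + |A(x)| \geq 1$: first, that $\rho$ is continuous on all of $\reals$; second, that $\rho$ is not (equal almost everywhere to) a polynomial.

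Continuity is immediate from the safe-PAU construction: $P$ is a polynomial hence continuous, $Q(x) = 1 + |b_1 x + \cdots + b_n x^n|$ is continuous (a composition of a polynomial with the absolute value), and $Q(x) \geq 1 > 0$ everywhere, so $\rho = P/Q$ is a quotient of continuous functions with nonvanishing denominator, hence continuous on $\reals$. This is exactly the property the safe-PAU restriction was introduced to guarantee, so I would simply invoke it.

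For the non-polynomial property I would argue by cases on the coefficients. If all $b_k = 0$ and also the numerator degree is such that $\rho$ reduces to $P(x)/1 = P(x)$, then $\rho$ is literally a polynomial; this degenerate case must be excluded by hypothesis (a genuine PAU activation is one actually used in the network, initialized to approximate a nonpolynomial activation such as ReLU or Swish, so it is not a polynomial — one can either add ``and $\rho$ is not a polynomial'' to the hypothesis or note that the initialization ensures it). Otherwise, suppose for contradiction that $\rho = P/Q$ agrees with a polynomial $R$ on $\reals$ (or a.e., but continuity upgrades a.e.\ equality to everywhere). Then $P(x) = R(x) Q(x)$ as continuous functions. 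If some $b_k \neq 0$, then $A(x) = b_1 x + \cdots + b_n x^n$ is a nonzero polynomial; on the set $\{A(x) > 0\}$ we have $Q(x) = 1 + A(x)$, while on $\{A(x) < 0\}$ we have $Q(x) = 1 - A(x)$, and these two rational expressions $R(x)(1+A(x))$ and $R(x)(1-A(x))$ must both equal $P(x)$ on sets with nonempty interior (since $A$ changes sign or is nonzero on an open set), forcing $R(x) A(x) \equiv 0$ as polynomials, hence $R \equiv 0$, hence $P \equiv 0$ — contradicting that $\rho$ is a nonpolynomial (nonzero) activation. Thus $\rho$ is continuous and not a polynomial, and the cited theorem gives density of single-hidden-layer networks, which are a subclass of $\NNe$, in $C(K)$; density of the larger class $\NNe$ follows a fortiori.

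The main obstacle, such as it is, is bookkeeping rather than depth: one must state the hypothesis on $\rho$ carefully enough to exclude the degenerate all-zero / reduces-to-polynomial case (a ``PAU activation function'' should mean one with the safe denominator and not identically a polynomial), and one must be slightly careful that the Leshno et al.\ result as usually stated allows the $\rho$ we have (it does, since it covers all continuous nonpolynomial $\rho$, and indeed all locally bounded nonpolynomial $\rho$ whose closure of discontinuities has measure zero). Everything else — continuity, the sign-splitting argument, and the reduction to the one-hidden-layer case — is routine.
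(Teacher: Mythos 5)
Your main route---continuity of the safe PAU plus the Leshno et al.\ criterion that a continuous non-polynomial activation yields density---is the same as the paper's Case~1, which invokes that criterion in the form of Theorem~1.1 of \citep{kidger2019universal}. The substantive divergence is in the degenerate case where $\rho$ is a polynomial. You propose to exclude it by strengthening the hypothesis (``and $\rho$ is not a polynomial''), which weakens the statement of the theorem. The paper instead keeps that case and closes it with a second result, Theorem~3.2 of \citep{kidger2019universal}: deep networks of width $n+m+2$ whose activation is continuously differentiable at some point with nonzero derivative there are dense in $C(K;\reals^m)$, and this holds even for polynomial activations (a nonconstant polynomial has such a point). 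Your version buys a shorter proof at the price of a weaker theorem; the paper's version proves the statement as given for (essentially) every parameter setting, at the cost of importing the deep-narrow-network result.

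Beyond that, your sign-splitting argument for non-polynomiality has a genuine gap. You derive $R\cdot A\equiv 0$ from the two identities $P=R(1+A)$ and $P=R(1-A)$, but both hold on sets with nonempty interior only when $A$ changes sign. If $A\geq 0$ everywhere, only $P=R(1+A)$ is forced, and $\rho$ can genuinely be a polynomial even though some $b_k\neq 0$: take $P(x)=x+x^3$ and $A(x)=x^2$, so that $\rho(x)=(x+x^3)/(1+|x^2|)=x$. Your argument would certify this $\rho$ as non-polynomial and hand it to the one-hidden-layer theorem, where density fails (a shallow network with the identity activation is affine). So the correct case split is on whether $\rho$ \emph{is} a polynomial, not on whether the $b_k$ vanish, and whenever it is one you need either your extra hypothesis or the paper's appeal to Theorem~3.2. (To be fair, the paper's own Case~1, which asserts that $Q\neq 1$ makes $\rho$ non-polynomial ``by definition,'' glosses over exactly the same point.)
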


The proof holds for both PAUs and safe PAUs as it makes no assumptions on the form of the denominator $Q(x)$, and is a direct application of the following propositions:

\begin{prop}
%\textbf{Proposition 1.} 
(From Theorem 1.1 in \citep{kidger2019universal}) Let $\rho \colon \reals \to \reals$ be any continuous function. Let $\NNn$ represent the class of neural networks with activation function $\rho$, with $n$ neurons in the input layer, one neuron in the output layer, and one hidden layer with an arbitrary number of neurons. Let $K \subseteq \reals^n$ be compact. Then $\NNn$ is dense in $C(K)$ if and only if $\rho$ is non-polynomial.
\end{prop}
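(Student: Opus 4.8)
The plan is to prove both directions of the equivalence, treating necessity as a short dimension-counting argument and sufficiency as the substantive part. For necessity (the ``only if''), I would argue the contrapositive: if $\rho$ is a polynomial of degree $d$, then every element of $\NNn$, being a finite sum $\sum_i c_i \rho(w_i \cdot x + b_i)$, is itself a polynomial in $x$ of total degree at most $d$. The space of such polynomials restricted to $K$ is finite-dimensional, hence a closed and proper linear subspace of the infinite-dimensional space $C(K)$, and a proper closed subspace can never be dense. This settles one direction cheaply.

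For sufficiency (the ``if''), assume $\rho$ is continuous and non-polynomial and proceed in two stages: first the univariate case $n=1$, then the lift to arbitrary $n$. For the univariate case I would first treat $\rho \in C^\infty$. Differentiating $w \mapsto \rho(wx+b)$ at $w=0$ yields $x^k \rho^{(k)}(b)$, and each such derivative is a uniform-on-compacta limit of divided differences of the translates $\rho(wx+b)$; since every divided difference is a finite linear combination of terms $\rho(w_j x + b)$, it lies in the closure of $\NNn$. Because $\rho$ is non-polynomial, no derivative $\rho^{(k)}$ vanishes identically, so for each $k$ there is a point $b_k$ with $\rho^{(k)}(b_k) \neq 0$; hence $x^k$ lies in the closure for every $k$, and the Weierstrass theorem then gives density.

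To remove the smoothness hypothesis I would mollify: convolving $\rho$ with a smooth, compactly supported bump $\varphi$ produces a $C^\infty$ function $\rho * \varphi$ that is a uniform limit, via Riemann sums of the convolution integral, of elements of $\NNn$; a standard lemma guarantees $\varphi$ can be chosen so that $\rho * \varphi$ remains non-polynomial, and the smooth case above then applies. Finally, to lift to dimension $n$ I would use that the span of ridge functions $x \mapsto g(a \cdot x)$, over directions $a \in \reals^n$ and continuous univariate $g$, is dense in $C(K)$, because the powers $(a \cdot x)^k$ over enough directions span all polynomials, which are dense by Stone--Weierstrass. Given a target $F \in C(K)$, I approximate it by a finite sum $\sum_i g_i(a_i \cdot x)$, then approximate each $g_i$ on the compact set $\{a_i \cdot x : x \in K\}$ by a univariate network from the first stage; substituting $t = a_i \cdot x$ and absorbing $a_i$ into the inner weights produces an element of $\NNn$, completing the argument.

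The main obstacle is the sufficiency direction, and within it the mollification step: one must simultaneously ensure that the smoothed activation $\rho * \varphi$ is reachable as a limit from $\NNn$ and that the non-polynomiality of $\rho$ survives the smoothing (the latter resting on the fact that if $\rho * \varphi$ were polynomial for every test function $\varphi$, then $\rho$ itself would be polynomial). The remaining pieces — divided differences reproducing derivatives, the density of ridge functions, and the final substitution — are comparatively routine.
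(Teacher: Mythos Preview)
Your proof sketch is correct and follows the classical Leshno--Lin--Pinkus--Schocken argument (necessity by a dimension count, sufficiency by recovering monomials through divided differences in the smooth case, mollification to reduce to the smooth case, and a ridge-function lift to $n$ variables). There is nothing to compare it to, however: the paper does not prove this proposition. It is quoted verbatim as an external result (``From Theorem 1.1 in Kidger and Lyons, 2019'') and used, together with Proposition~2, as a black box in the two-case proof of Theorem~1. So you have supplied a full argument where the paper simply cites the literature; your write-up would be appropriate for a self-contained treatment but is not expected here.
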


\begin{prop}
%\textbf{Proposition 2.} 
(From Theorem 3.2 in \citep{kidger2019universal}) Let $\rho \colon \reals \to \reals$ be any continuous function which is continuously differentiable at at least one point, with nonzero derivative at that point. Let $K \subseteq \reals^n$ be compact. Then $\NN{n + m + 2}$ is dense in $C(K; \reals^m)$.
\end{prop}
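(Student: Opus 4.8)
The plan is to prove the statement the way the cited result is established: show that the weak hypothesis on $\rho$ (continuous differentiability at a single point $x_0$ with $\rho'(x_0)\neq 0$) already lets a \emph{deep} network of the fixed width $n+m+2$ simulate an arbitrary \emph{shallow} network to any accuracy. Concretely, I would first invoke Proposition 1 (applied componentwise across the $m$ outputs) to obtain, for a target $f\in C(K;\reals^m)$ and tolerance $\varepsilon$, a one-hidden-layer network $g(x)=\sum_{i=1}^{N}c_i\,\rho(w_i^{\top}x+\beta_i)+\gamma$ with $c_i,\gamma\in\reals^m$, $w_i\in\reals^n$, $\beta_i\in\reals$, such that $\sup_{x\in K}\|f(x)-g(x)\|<\varepsilon/2$. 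It then suffices to realize $g$ within $\varepsilon/2$ by a single element of $\NN{n+m+2}$.

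For the simulation I would use a \emph{register} construction: the hidden state of each layer, a vector in $\reals^{n+m+2}$, is read as $(\tilde x, s, u, v)$ with $\tilde x\in\reals^n$ a running copy of the input, $s\in\reals^m$ an accumulator for the partial output sum, and $u,v\in\reals$ two scratch neurons. The network is organized into $N$ blocks; block $i$ reads $\tilde x$, computes the single nonlinear term $h_i=\rho(w_i^{\top}\tilde x+\beta_i)$ in a scratch neuron (the one genuinely nonlinear use of $\rho$ per block), and updates the accumulator to $s+c_i h_i$ through the affine parts of the layers, after which a final affine read-out adds $\gamma$ and returns $s$. The difficulty is that carrying $\tilde x$ and $s$ unchanged across a layer requires an identity map, which $\rho$ does not provide. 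Here the hypothesis enters: since $\rho$ is $C^1$ near $x_0$ with $\rho'(x_0)\neq 0$, for small $\delta>0$ the map $t\mapsto\big(\rho(x_0+\delta t)-\rho(x_0)\big)/(\delta\,\rho'(x_0))$ converges to $t$ uniformly on any bounded interval as $\delta\to 0$; the pre-scaling $t\mapsto x_0+\delta t$ and the post-scaling are affine and hence absorbed into the surrounding layer maps, so each register coordinate is transported across a layer by one neuron with error uniformly $o(1)$ in $\delta$.

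The main obstacle, and where I would concentrate the work, is controlling the \emph{accumulated} error of these approximate identities across the $\Theta(N)$ layers while guaranteeing that every pre-activation argument of a transport neuron stays inside the neighborhood of $x_0$ on which $\rho$ is $C^1$. I would phrase this as an induction over blocks: since $K$ is compact and the finitely many weights $w_i,\beta_i,c_i,\gamma$ are fixed, the true quantities $x$, the partial sums, and the values $h_i$ all lie in a fixed bounded set $B$; choosing $\delta$ small enough that the uniform transport error on $B$ falls below a threshold keeps the registers inside a slightly enlarged bounded set at every layer, which in turn keeps all pre-activations in the good neighborhood and makes the error estimate self-consistent. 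As there are only finitely many blocks and the intervening affine maps are Lipschitz on bounded sets, the total deviation of the deep network's output from $g(x)$ is driven below $\varepsilon/2$ by taking $\delta$ small; the triangle inequality then yields uniform error below $\varepsilon$, proving density. Finally, the reduction through Proposition 1 needs $\rho$ non-polynomial; the remaining (necessarily nonaffine) polynomial case — irrelevant for PAUs, which are non-polynomial — is handled by the same register machinery, replacing the shallow reduction with the fact that a nonaffine polynomial $\rho$ lets a narrow network implement multiplication via finite differences that expose its leading coefficient, after which arbitrary polynomials, dense by Stone--Weierstrass, can be assembled.
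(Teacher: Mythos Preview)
The paper does not supply a proof of this proposition; it is quoted verbatim as Theorem~3.2 of \cite{kidger2019universal} and used as a black box in Case~2 of the proof of Theorem~\ref{thm:theorem1}. Your sketch is essentially the argument of that reference---the register construction of width $n+m+2$, the approximate identity $t\mapsto(\rho(x_0+\delta t)-\rho(x_0))/(\delta\rho'(x_0))$ extracted from the single point of differentiability, the inductive error control over the blocks, and the separate treatment of the polynomial case via finite differences and Stone--Weierstrass---so there is nothing in the present paper to compare it against.

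One side remark in your write-up is inaccurate and worth correcting because it touches exactly the point where the paper needs this proposition: you call the polynomial case ``irrelevant for PAUs, which are non-polynomial,'' but a PAU with $Q(x)\equiv 1$ is precisely a polynomial, and this is exactly why the paper invokes Proposition~2 in Case~2 of its proof of Theorem~\ref{thm:theorem1}.
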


\begin{proof}
Let $\rho(x) = P(x)/Q(x)$, we have to consider the following two cases: 
\begin{case} $Q(x) \ne 1$, by definition, $\rho(x)$ is non-polynomial. Then by proposition 1, we get that $\NNn$ is dense in $C(K)$.
\end{case}
\begin{case} $Q(x) = 1$, here $\rho(x)=\sum _{j=0}^{m}a_{j}x^{j}$, is polynomial, continuous and continuously differentiable in $\reals$.
Let any $a_{j>0} > 0$, then there exists a point $\alpha \in \reals$ such that $\rho'(\alpha)\ne0$ and then
by proposition 2, we get that $\NN{n + m + 2}$ is dense in $C(K; \reals^m)$.
\end{case}
\end{proof}
%A standard multi-layer perceptron (MLP) with enough hidden units and non-polynomial activation functions is a universal approximator, see e.g.~\citep{hornikSW89,leshno93nonpoly}. Similarly, Padé Networks are universal approximators. This can be sketched as follows. \citet{luPWH017} have shown a universal approximation theorem for width-bounded ReLU networks: width-$(n + 4)$ ReLU networks, where $n$ is the input dimension, are universal approximators. 
%ReLU networks, however, can be $\epsilon$-approximated using rational functions,  requiring a representation whose size is polynomial in $\text{ln}(1 / \epsilon)$ \citep{telgarsky17}. Thus, it follows that any continuous function can be approximated arbitrarily well on a compact domain by a Padé network with one (potentially unsafe) PAU. Since ReLU networks also $\epsilon$-approximate rational functions \citep{telgarsky17}, Padé networks can also be reduced to ReLU networks. This link paves the ways to globally optimal training \citep{arora2018understanding}, under certain conditions, as well as to provable robustness \citep{croceA019} of Padé networks. 

%\subsection{Networks inside Networks}
\subsection{Sparse Padé Networks and Randomized PAUs}
\label{sec:network_in_network}

Padé Networks can $\epsilon$-approximate neural networks with ReLU activations~\citep{telgarsky17}. This implies that by using PAUs, we are embedding a virtual network into the networks we want to use.
This, in turn, is the operating assumption of the lottery ticket hypothesis due to \cite{FrankleC19}. Thus, we expect that lottery ticket pruning can find well-performing Padé networks that are smaller than their original counterparts while reducing inference time and potentially improving the predictive performance.

Generally, overfitting is an important practical concern when training neural networks. Usually, we can apply regularization techniques such as Dropout \citep{srivastava2014dropout}. Unfortunately, although each PAU  approximates a small ReLU network section, we do not have access to the internal representation of this virtual network. Therefore, we can not regularize the activation function via standard Dropout. 
An alternative for regularizing activation functions was introduced in Randomized Leaky ReLUs (RReLUs, \cite{xu2015empirical}), where the negative slope parameter is sampled uniformly on a range. This makes the activation function behave differently at training time for every input $x$, forwarding and backpropagating according to $x$ and the sampled noise. 

We can employ a similar technique to make PAUs resistant to overfitting. Consider a PAU with coefficients $\mathbf{C} = \{a_0, \cdots,a_m, b_0, \cdots, b_n\}$. We can introduce additive noise during training into each coefficient $c_i \in \mathbf{C}$ for every input $x_j$ via $c_{i,j} = c_i + z_{i,j}$ where $z_{i,j} \sim U(l_i,u_i)$, $l_i = (1 - \alpha\%)*c_i$ and reciprocally $u_i = (1 + \alpha\%)*c_i$. This results in Randomized PAU (RPAU):
\begin{equation} 
\label{eqn:rpau}
R(x_j)={\frac {c_{0,j}+c_{1,j}x+c_{2,j}x^{2}+\cdots +c_{m,}x^{m}}{1+|c_{m+1}x+c_{m+2}x^{2}+\cdots +c_{m+n}x^{n}|}}\;.
\end{equation}
We compute the gradients as before and simply replace the coefficients by their noisy counterparts.

\section{Experimental Evaluation}
Our intention here is to investigate the behavior and performance of PAUs as well as to compare them to other activation functions using standard deep neural networks. 
All our experiments are implemented in PyTorch with PAU implemented in CUDA, and were executed on an NVIDIA DGX-2 system. In all experiments, we initialized PAUs with coefficients that approximate LeakyReLUs for a rational function of order $m=5, n=4$. %Also, we replaced all activation by PAUs.
In all experiments except for ImageNet, we report the mean of 5 runs initialized with different seeds for the accuracy on the test-set after training. And, we compared {\bf PAU} to the following activation functions: {\bf ReLU}, {\bf ReLU6}, {\bf Leaky ReLU} (LReLU), {\bf Random ReLU} (RReLU), {\bf ELU}, {\bf CELU}, {\bf Swish}, {\bf Parametric ReLU} (PReLU), {\bf Maxout}, {\bf Mixture of activations} (Mixture), {\bf SLAF}, {\bf APL} and {\bf SReLU}. For details on all the activation functions, we refer to Appendix~\ref{ax:activation_functions}. As datasets we considered {\bf MNIST}, {\bf Fahion-MNIST}, {\bf CIFAR-10} and {\bf ImageNet}.

\subsection{Empirical Results on MNIST and Fashion-MNIST Benchmarks}

\begin{figure}[t]
	\begin{center}
	    \includegraphics[width=1.0\textwidth]{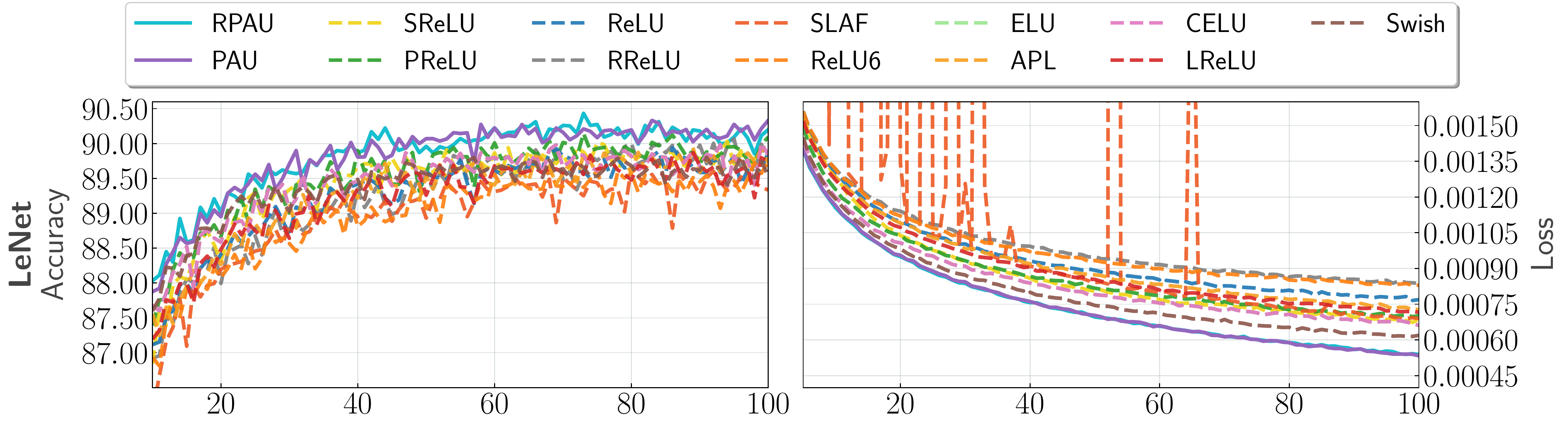}
		\caption{PAU compared to baseline activation function units over 5 runs on Fashion-MNIST using the LeNet architecture: (left) mean test-accuracy (the higher, the better) and (right) mean train-loss (the lower, the better). As one can see, PAU outperforms all baseline activations and enable the networks to achieve a lower loss during training compared to all baselines. (Best viewed in color) \label{fig:mnist_results_lenet}}
	\end{center}
\end{figure}

First we evaluated PAUs on MNIST \citep{lecun2010mnist} and Fashion-MNIST \citep{xiao2017fashion} using two different architectures: LeNet \citep{leCun18lenet} and VGG-8 \citep{simonyanZ14a}.
For more details on the architectures, learning settings, and results, we refer to Sec.~\ref{ax:mnist_experiments}.

As can be seen in Fig.~\ref{fig:mnist_results_lenet} and Tab.~\ref{tab:results_mnist_fmnist}, PAU outperformed on average the baseline activation functions on every network in terms of predictive performance. Moreover, the results are stable on different runs (c.f.~mean $\pm$ std). PAUs also enable the networks to achieve a lower loss during training compared to all baselines on all networks. Actually, PAU achieved the best results on both datasets and on Fashion-MNIST it provides the best results for both architectures.
As expected, reducing the bias is beneficial in this experiment. Comparing the baseline activation functions on the MNIST dataset and the different architectures, there is no clear choice of activation that achieves the best performance. However, PAU always matches or even outperforms the best performing baseline activation function. This shows that a learnable activation function relieves the network designer of having to commit to a potentially suboptimal choice.
Moreover, Fig.~\ref{fig:mnist_results_lenet} also shows that PAU is more stable than SLAF. This is not unexpected as Taylor approximations tend to oscillate and overshoot \citep{trefethen12book}. We also observed undefined values at training time for SLAF; therefore, we do not compare against it in the following experiments.
Finally, when considering the number of parameters used by PAU, we can see that they are very efficient. The VGG-8 network uses 9.2 million parameters, PAU here uses 50 parameters, and for LeNet, the network uses 0.5 million parameters while PAU uses only 40.

In summary, this shows that PAUs are stable, parsimonious and can improve the predictive performance of deep neural networks (\ref{a:stable},\ref{a:parsimonious},\ref{a:accuracy}).

\begin{table*}[t]
  \small
    \setlength\tabcolsep{1pt}
    \centering
    \caption{Performance comparison of activation functions on MNIST and Fashion-MNIST (the higher, the better) on two common deep architectures. Shown are the results averaged over 5 reruns as well as the top result among these 5 runs. The best (``$\bullet$'') and runner-up (``$\circ$'') results per architecture are \textbf{bold}. As one can see, PAUs consistently outperform the other activation functions on average and yields the top performance on each dataset.  
    \label{tab:results_mnist_fmnist}}
    %\circ \bullet
    {\def\arraystretch{1.}
    \begin{tabular}{l|rr|rr|rr|rr|}
    & \multicolumn{2}{c|}{\textbf{VGG-8}}&\multicolumn{2}{c|}{\textbf{LeNet}}&\multicolumn{2}{c|}{\textbf{VGG-8}}&\multicolumn{2}{c|}{\textbf{LeNet}}\\
    &\textbf{mean $\pm$ std}&\textbf{best}&\textbf{mean $\pm$ std}&\textbf{best}&\textbf{mean $\pm$ std}&\textbf{best}&\textbf{mean $\pm$ std}&\textbf{best} \\ 
    \cline{2-9}
    \multicolumn{9}{c}{}\\
    &\multicolumn{4}{c}{\textbf{MNIST}}&\multicolumn{4}{c}{\textbf{Fashion-MNIST}}\\
    \textbf{ReLU}   & $99.17 \pm0.10$ & $99.30$ & $99.17 \pm0.05$ & $99.25$&$89.11 \pm0.43$ & $89.69$ & $89.86 \pm0.32$ & $90.48$\\
    \textbf{ReLU6}  & $99.28 \pm0.04$ & $99.31$ &$99.09 \pm0.09$ & $99.22$&$89.87 \pm0.62$ & $90.38$ & $89.74 \pm0.27$ & $89.96$\\
    \textbf{LReLU}  & $99.13 \pm0.11$ & $99.27$ &$99.10 \pm0.06$ & $99.22$&$89.37 \pm0.30$ & $89.74$&  $89.74 \pm0.24$ & $90.02$\\
    \textbf{RReLU} & $99.16 \pm0.13$ & $99.28$ & $99.20 \pm0.13$ & $\mathbf{\circ99.38}$ & $88.46 \pm0.85$ & $89.32$ & $89.74 \pm0.19$ & $89.88$ \\
    \textbf{ELU} & $99.15 \pm0.09$ & $99.28$ & $99.15 \pm0.06$ & $99.22$ & $89.65 \pm0.33$ & $90.06$ & $89.84 \pm0.47$ & $90.25$ \\
    \textbf{CELU} & $99.15 \pm0.09$ & $99.28$ & $99.15 \pm0.06$ & $99.22$ & $89.65 \pm0.33$ & $90.06$ & $89.84 \pm0.47$ & $90.25$ \\
    %\textbf{Tanh}   & $99.16 \pm0.06$ & $99.27$ &$98.85 \pm0.09$ & $99.02$&$99.52 \pm0.04$ & $99.56$\\
    \textbf{Swish}  & $99.10 \pm0.06$ & $99.20$ &$99.19 \pm0.09$ & $99.29$&$88.54 \pm0.59$ & $89.36$& $89.54 \pm0.22$ & $89.89$\\
    \textbf{PReLU}  & $99.16 \pm0.09$ & $99.25$ &$99.14 \pm0.09$ & $99.24$&$88.82 \pm0.51$ & $89.54$ &$90.09 \pm0.22$ & $\mathbf{\circ90.29}$\\
    \textbf{SLAF} & ---&---&---&---& $90.60 \pm0.00$ & $90.60$ & $89.33 \pm0.28$ & $89.80$ \\
    \textbf{APL} & $\mathbf{\circ99.35 \pm0.11}$ & $\mathbf{\bullet99.50}$ & $99.18 \pm0.10$ & $99.33$ & $\mathbf{\bullet91.41 \pm0.48}$ & $\mathbf{\bullet92.25}$ & $89.72 \pm0.30$ & $90.01$ \\
    \textbf{SReLU} & $99.15 \pm0.03$ & $99.20$ & $99.13 \pm0.14$ & $99.27$ & $89.65 \pm0.42$ & $90.31$ & $89.83 \pm0.30$ & $90.28$ \\\hline
    \textbf{PAU} & $99.30 \pm0.05$ & $\mathbf{\circ99.40}$ &$\mathbf{\circ99.21 \pm0.04}$ & $99.26$&$\mathbf{\circ91.25 \pm0.18}$ & $\mathbf{\circ91.56}$ & $\mathbf{\bullet90.33 \pm0.15}$ & $\mathbf{\bullet90.62}$ \\
    \textbf{RPAU} & $\mathbf{\bullet99.35 \pm0.04}$ & $99.38$ & $\mathbf{\bullet99.26 \pm0.11}$ & $\mathbf{\bullet99.42}$ & $91.23 \pm0.15$ & $91.41$ & $\mathbf{\circ90.20 \pm0.11}$ & $\mathbf{\circ90.29}$ \\
    \end{tabular}
    }
    
\end{table*}
 
\subsection{Learned Activation Functions on MNIST and Fashion-MNIST}
When looking at the activation functions learned from the data, we can see that the PAU family is flexible yet presents similarities to standard functions. 
In particular, Fig.~\ref{fig:actvfunc_mnist_vgg} illustrates that some of the learned activations seem to be smoothed versions of Leaky ReLUs, since V-shaped activations are simply Leaky ReLUs with negative $\alpha$ values. This is not surprising as we initialize PAUs with coefficients that match Leaky ReLUs. Finding different initialization and optimization parameters is left as future work.
In contrast, when learning piecewise approximations of the same activations using Maxout, we would require a high $k$. This significantly increases the number of parameters of the network. This again provides more evidence in favor of PAUs being flexible and parsimonious (\ref{a:parsimonious}). SLAF produced undefined values during training on all networks except Fashion-MNIST where LeNet finished 4 runs and VGG only one run.

\begin{figure}[b]
	\includegraphics[width=\linewidth]{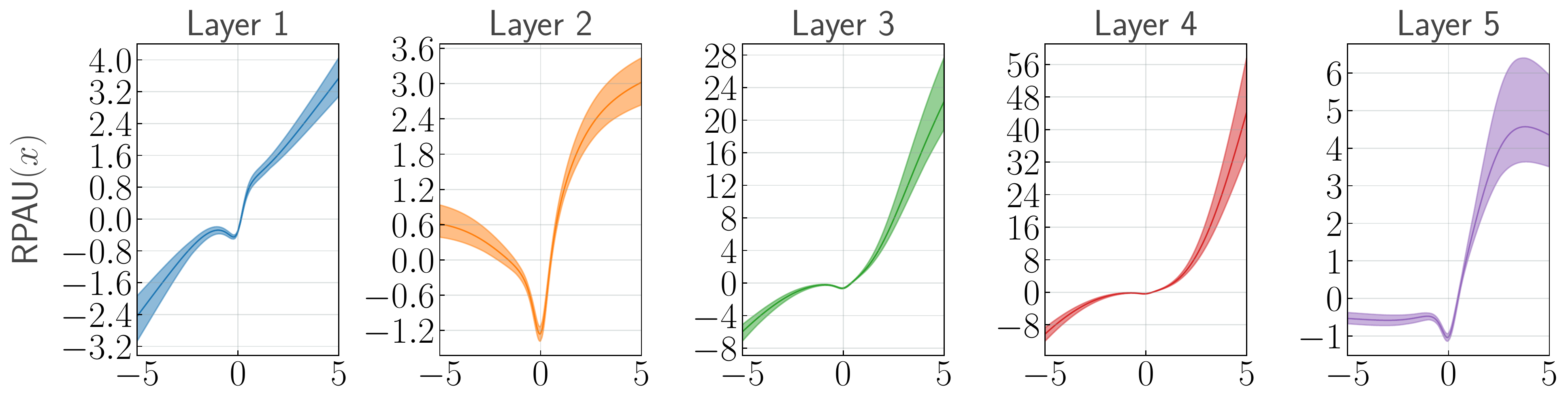}
	\caption{Estimated activation functions after training the VGG-8 network with RPAU on Fashion-MNIST. The center line indicates the PAU while the surrounding area indicates the space of the additive noise in RPAUs. As one can see, the PAU family differs from common activation functions but capture characteristics of them. (Best viewed in color)
	%Illustrations of the learned PAUs of VGG and the other networks can be found in Figs.~\ref{fig:learned_activations_mnist} and \ref{fig:learned_activations_fmnist} in the Appendix.	
	\label{fig:actvfunc_mnist_vgg}}
\end{figure} 

\subsection{Empirical Results on CIFAR-10}
After investigating PAU on MNIST and Fashion-MNIST, we considered a more challenging setting: 
%Here we include the 
CIFAR-10 (\cite{krizhevsky2009learning}). We also considered other \textit{learnable} activation functions, namely Maxout (k=2) and Mixture of activations (Id and ReLU) as well as another popular deep network architectures: MobileNetV2 \citep{sandler2018mobilenetv2}, ResNet101 \citep{he2016deep} and DenseNet121 \citep{huang2017densely}.
For more details on the learning settings and results, we refer to Sec.~\ref{ax:cifar10_imagenet_experiments}. %in the Appendix.

Let us start by considering the results for VGG-8 and MobileNetV2 on CIFAR-10. These networks are the smallest of this round of experiments and, therefore, could benefit more from bias reduction. Indeed, we can see in Tab.~\ref{tab:results_cifar10} that both networks take advantage of learnable activation functions, i.e., Maxout, PAU, and RPAU. As expected, adding more capacity to VGG-8 helps and this is what Maxout is doing. Moreover, even if Mixtures do not seem to provide a significant benefit on VGG-8, they do help in MobileNetV2. Here, we see again that PAU and RPAU are either in the lead or close to the best when it comes to predictive performance, without having to make a choice apriori. 

Now, let us have a look at the performance of PAU and RPAU on the larger networks ResNet101 and DenseNet121. As these networks are so expressive, we do not expect the flexibility of the learnable activation functions to have a big impact on the performance. Tab.~\ref{tab:results_cifar10} confirms this. Nevertheless, they are still competitive and their performance is stable as shown by the standard deviation. On ResNet101, PAUs actually provided the top performance.

\begin{table}[t]
\small
\setlength\tabcolsep{1pt}
\centering
\caption{Performance comparison of activation functions on CIFAR-10 (the higher, the better) on four state-of-the-art deep neural architectures. Shown are the results averaged over 5 reruns as well as the top result among these 5 runs. The best (``$\bullet$'') and runner-up (``$\circ$'') results per architecture are \textbf{bold}. As one can see, PAUs are either in the lead or close to the best. (``***'') are experiments that did not finish on time.
    \label{tab:results_cifar10}}
\begin{tabular}{l|rr|rr|rr|rr|}
& \multicolumn{2}{c|}{\textbf{VGG-8}}&\multicolumn{2}{c|}{\textbf{MobileNetV2}}&\multicolumn{2}{c|}{\textbf{ResNet101}}&\multicolumn{2}{c|}{\textbf{DenseNet121}}\\
&\textbf{mean $\pm$ std}&\textbf{best}&\textbf{mean $\pm$ std}&\textbf{best}&\textbf{mean $\pm$ std}&\textbf{best}&\textbf{mean $\pm$ std}&\textbf{best} \\ \cline{2-9}
\multicolumn{9}{c}{}\\
\textbf{ReLU} & $92.32 \pm0.16$ & $92.58$ & $91.51 \pm0.28$ & $91.82$ & $95.07 \pm0.17$ & $95.36$ & $\mathbf{\circ95.36 \pm0.18}$ & $\mathbf{\circ95.63}$ \\
\textbf{ReLU6} & $92.36 \pm0.06$ & $92.47$ & $91.30 \pm0.23$ & $91.57$ & $95.11 \pm0.24$ & $95.29$ & $95.33 \pm0.14$ & $95.46$ \\
\textbf{LReLU} & $92.43 \pm0.14$ & $92.65$ & $91.94 \pm0.12$ & $92.08$ & $95.08 \pm0.19$ & $95.29$ & $\mathbf{\bullet95.42 \pm0.17}$ & $\mathbf{\bullet95.65}$ \\
\textbf{RReLU} & $92.32 \pm0.07$ & $92.42$ & $\mathbf{\circ94.66 \pm0.16}$ & $\mathbf{\circ94.94}$ & $95.21 \pm0.23$ & $\mathbf{\circ95.51}$ & $95.00 \pm0.12$ & $95.14$ \\
\textbf{ELU} & $91.24 \pm0.09$ & $91.33$ & $90.43 \pm0.14$ & $90.61$ & $94.04 \pm0.14$ & $94.24$ & $90.78 \pm0.29$ & $91.23$ \\
\textbf{CELU} & $91.24 \pm0.09$ & $91.33$ & $90.69 \pm0.27$ & $90.97$ & $93.80 \pm0.36$ & $94.25$ & $90.88 \pm0.19$ & $91.08$ \\
\textbf{PReLU} & $92.22 \pm0.26$ & $92.51$ & $93.54 \pm0.45$ & $93.95$ & $94.15 \pm0.39$ & $94.50$ & $94.98 \pm0.16$ & $95.15$ \\
\textbf{Swish} & $91.58 \pm0.18$ & $91.86$ & $92.04 \pm0.13$ & $92.21$ & $91.83 \pm1.61$ & $92.84$ & $93.04 \pm0.16$ & $93.32$ \\
\textbf{Maxout} & $\mathbf{\bullet93.03 \pm0.11}$ & $\mathbf{\bullet93.23}$ &  $94.41 \pm0.10$ & $94.54$ & $95.11 \pm0.13$ & $95.23$ &*** & ***  \\
\textbf{Mixture} & $91.86 \pm0.14$ & $92.06$ & $94.06 \pm0.16$ & $94.25$ & $94.50 \pm0.25$ & $94.71$ & $93.33 \pm0.17$ & $93.59$ \\
\textbf{APL} & $91.63 \pm0.13$ & $91.82$ & $93.62 \pm0.64$ & $94.50$ & $94.12 \pm0.36$ & $94.50$ & $94.45 \pm0.23$ & $94.78$ \\
\textbf{SReLU} & $\mathbf{\circ92.66 \pm0.27}$ & $\mathbf{\circ93.13}$ & $94.03 \pm0.11$ & $94.25$ & $\mathbf{\circ95.24 \pm0.13}$ & $95.38$ & $94.77 \pm0.24$ & $95.20$
\\\hline
\textbf{PAU} & $92.51 \pm0.16$ & $92.70$ & $94.57 \pm0.21$ & $94.90$ & $95.16 \pm0.13$ & $95.28$ & $95.03 \pm0.07$ & $95.16$ \\
\textbf{RPAU} & $92.50 \pm0.09$ & $92.62$ & $\mathbf{\bullet94.82 \pm0.21}$ & $\mathbf{\bullet95.13}$ & $\mathbf{\bullet95.34 \pm0.13}$ & $\mathbf{\bullet95.54}$  & $95.27 \pm0.10$ & $95.41$ \\%$95.04 \pm0.10$ & $95.17$\\
\end{tabular}

\end{table}

\subsection{Finding Sparse Padé Networks}
As discussed in Sec.~\ref{sec:network_in_network}, using PAUs in a network is equivalent to introducing virtual networks of ReLUs in the middle of the network, effectively adding virtual depth to the networks. Therefore,  we also investigated whether pruning can help one to unmask smaller sub-networks whose performance is similar to the original network. In a sense, we are removing blocks of the real network as they get replaced by the virtual network. Here, we only do pruning on the convolutional layers. For details about the algorithm and hyper-parameters, wee refer to Sec.~\ref{ax:pruning}.

Specifically,  we compared PAU against the best activation functions for the different architectures. However, we discarded Maxout, as instead of pruning it introduces many more parameters into the network defeating the original purpose.
As one can see in Fig.~\ref{fig:pruning}, pruning on the already size-optimized networks VGG-8 and MobileNetV2 has an effect on the predictive performance. However, the performance of PAU remains above the other activation functions despite the increase in pruning pressure.
In contrast, when we look at ResNet101, we see that the performance of PAU is not influenced by pruning, showing that indeed we can find sparse Padé network without major loss in accuracy. And what is more, PAU enables the ResNet101 subnetwork, pruned by 30\%, to achieve a higher accuracy compared to all pruned and not pruned networks.

\begin{figure}[t]
	\centering
	 \begin{subfigure}{
        \includegraphics[width=0.315\linewidth]{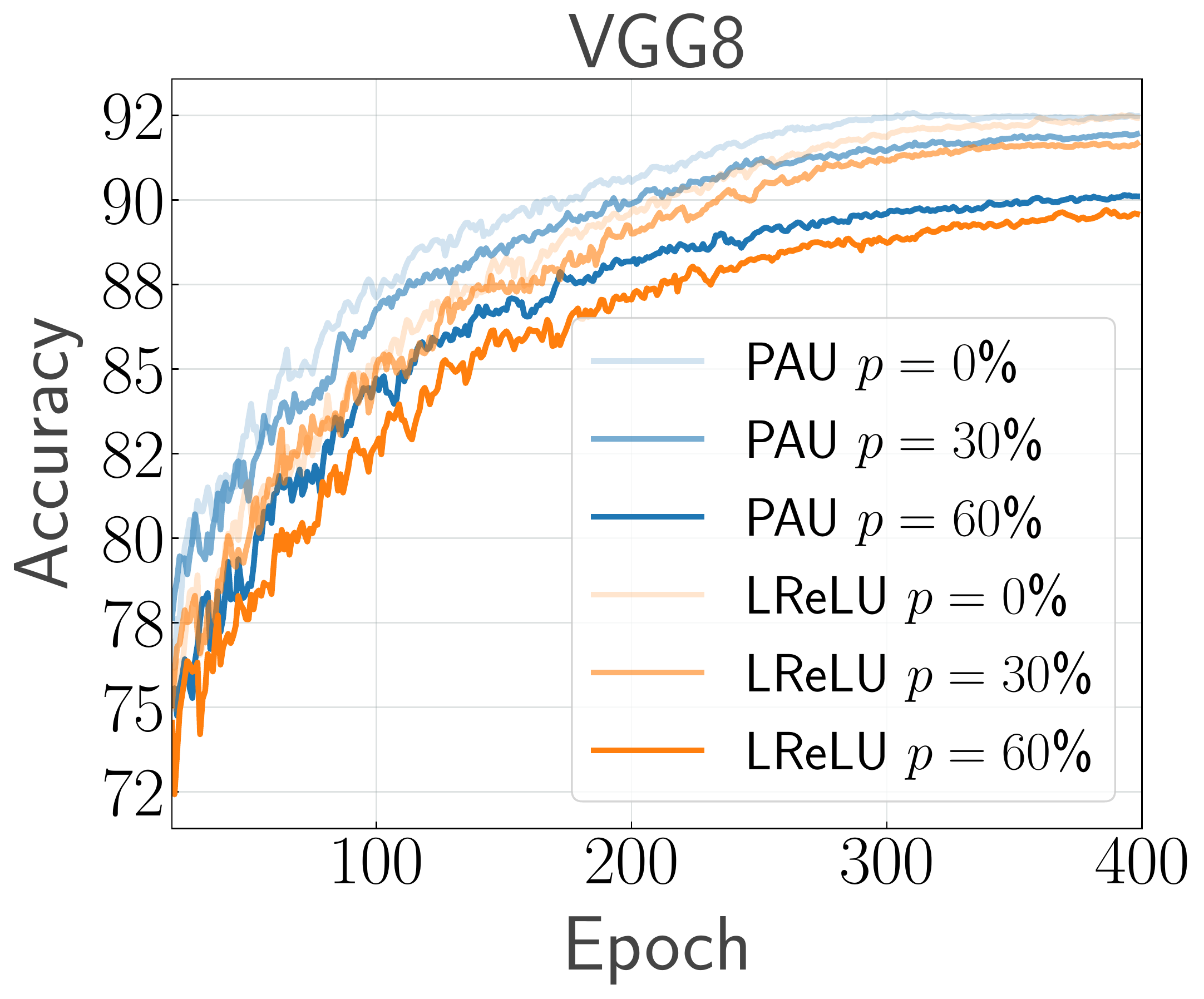}}
	 \end{subfigure}
	 \begin{subfigure}{
        \includegraphics[width=0.315\linewidth]{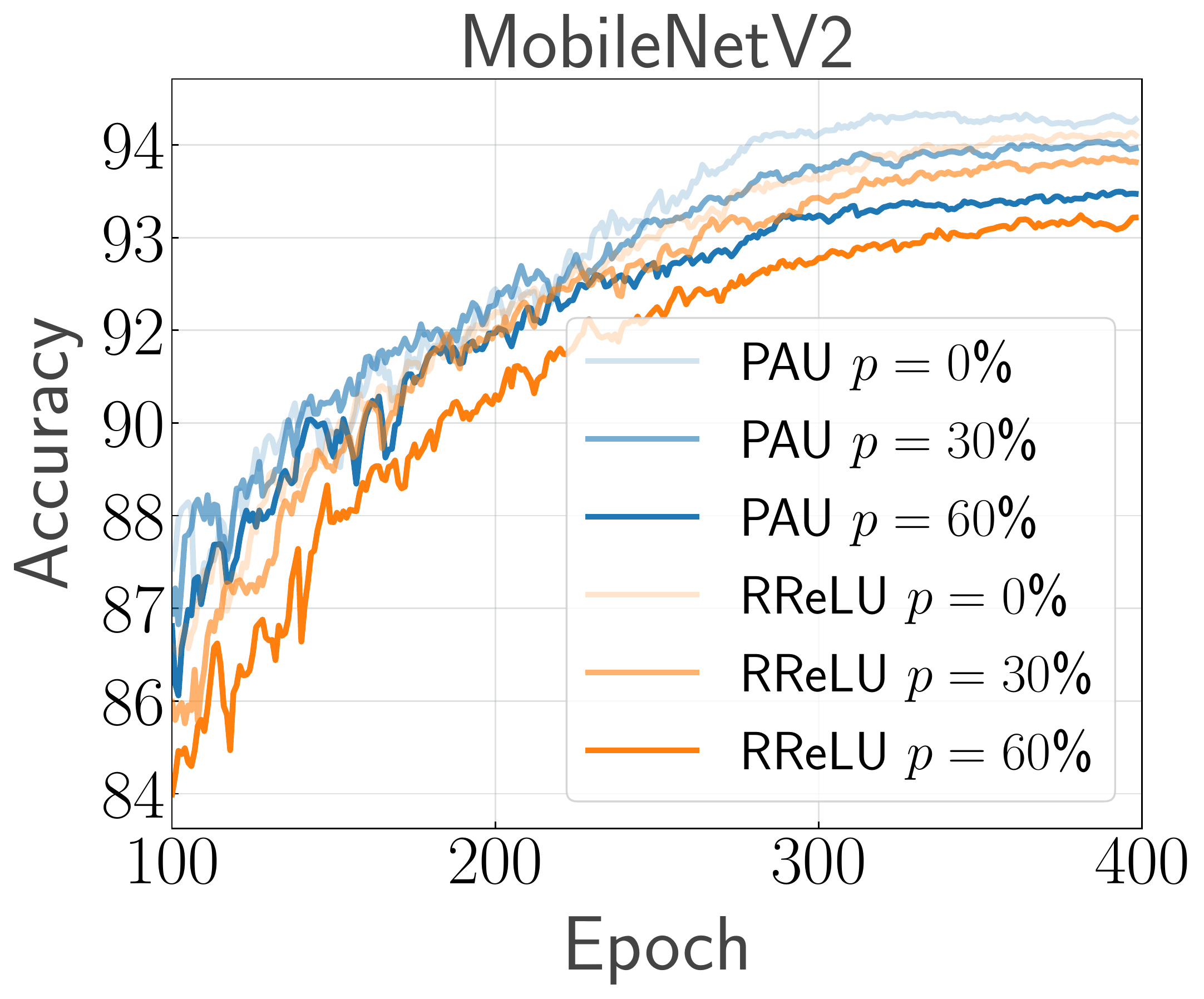}}
	 \end{subfigure}
	 \begin{subfigure}{
        \includegraphics[width=0.315\linewidth]{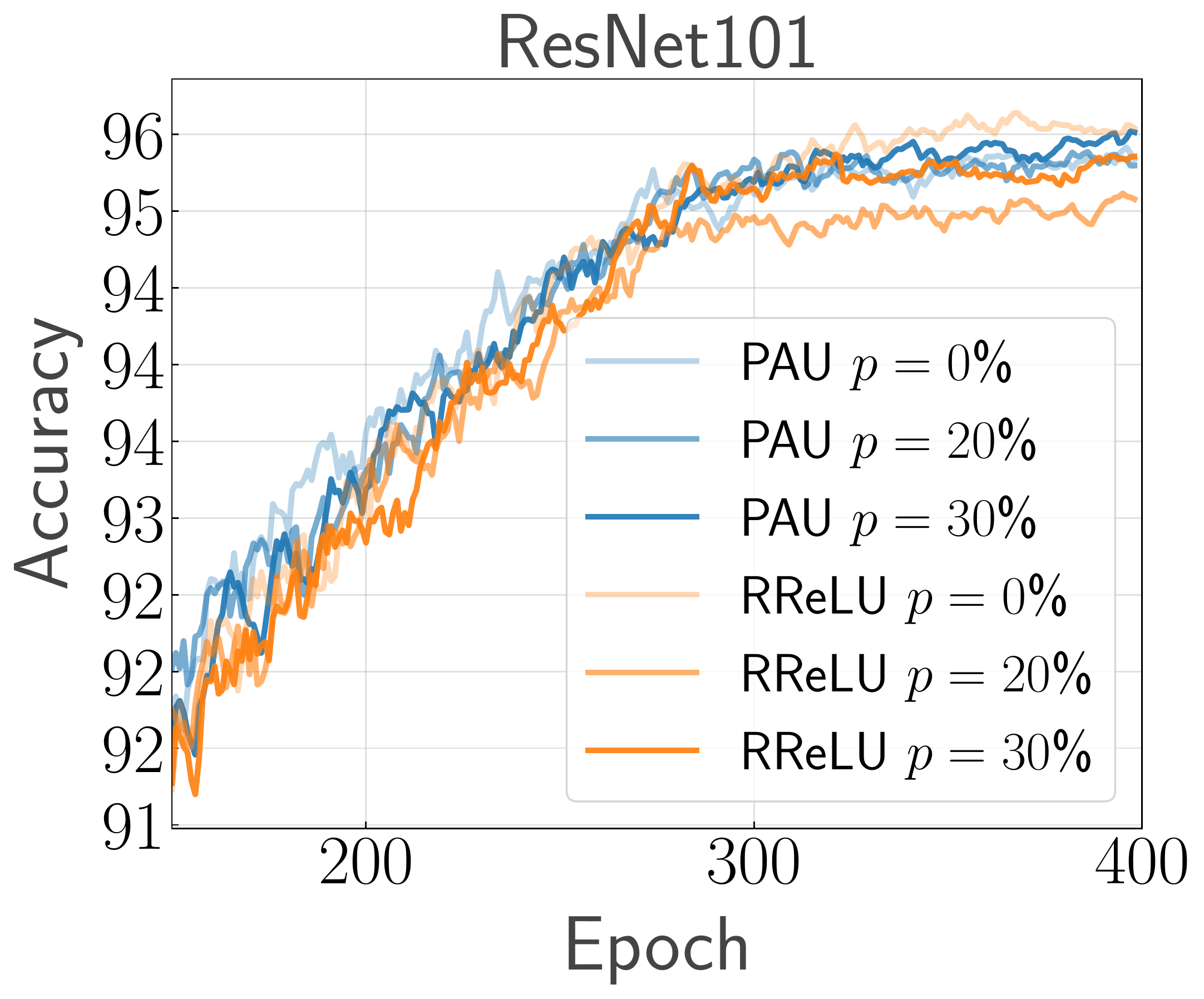}}
	 \end{subfigure}
	\caption{Comparison of the predictive accuracy (higher is better) for the architectures VGG-8, MobileNetV2 and ResNet101 between PAU and the best activation functions according to Tab.~\ref{tab:results_cifar10}. PAU is consistently better. On ResNet101 PAU is not affected by the increase pruning pressure. Furthermore, PAU enables the ResNet101 subnetwork, pruned by 30\%, to achieve a higher accuracy compared to all pruned and not pruned networks. (Best viewed in color)   \label{fig:pruning}}
\end{figure}

\subsection{Empirical Results on ImageNet}

Finally, we investigated the performance on a much larger dataset, namely ImageNet (\cite{RussakovskyDSKS15}) used to train MobileNetV2. As can be seen in Fig.~\ref{fig:results_imagenet} and Tab.~\ref{tab:results_imagenet}, PAU and Swish clearly dominate in performance (\ref{a:accuracy}). PAU leads in top-1 accuracy and Swish in top-5 accuracy. Moreover, both PAU and Swish show faster learning  compared to the other activation functions.

Furthermore, we argue that the rapid learning rate of PAU in all the experiments indicate that they do not exhibit vanishing gradient issues (\ref{a:vanishing_gradient}).

\begin{figure}[ht]
	\centering
	 \includegraphics[width=0.90\linewidth]{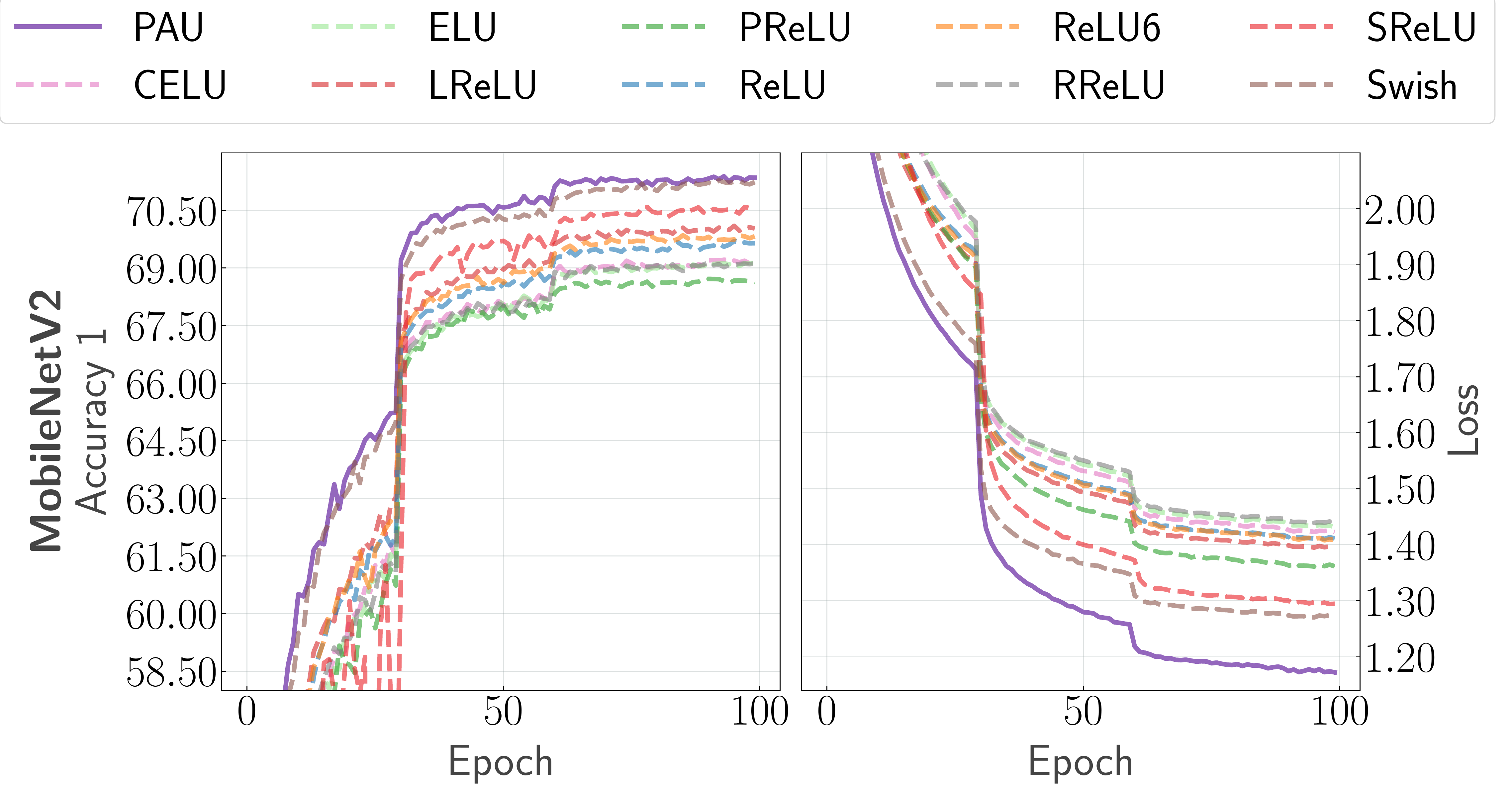}
	 \caption{MobileNetV2 top-1 test accuracy on the left (higher is better) and training loss on the right (lower is better) for multiple activation functions in ImageNet. PAU achieves higher accuracy and lower loss values in fewer epochs. (Best viewed in color)  \label{fig:results_imagenet}}
\end{figure}

\begin{table}[ht]
\small
\setlength\tabcolsep{3.5pt}
\centering
\begin{tabular}{r|c|c|c|c|c|c|c|c|c|c}
\multicolumn{1}{l|}{\textbf{MobileNetV2}} & ReLU & ReLU6 & LReLU & RReLU & ELU & CELU & PReLU & Swish & SReLU & \textbf{PAU} \\ \hline
\textbf{Acc@1} & $69.65$ & $69.83$ & $70.03$ & $69.12$ & $69.13$ & $69.17$ & $68.61$ & $\mathbf{\circ71.24}$ & $70.62$ & $\mathbf{\bullet71.35}$ \\
\textbf{Acc@5} & $89.09$ & $89.34$ & $89.26$ & $88.80$ & $88.46$ & $88.59$ & $88.51$ & $\mathbf{\bullet89.95}$ & $89.59$ & $\mathbf{\circ89.85}$
\end{tabular}
\captionof{table}{MobileNetV2 top-1 and top-5 accuracies in ImageNet (higher is better) for different activations. Best (``$\bullet$'') and runner-up (``$\circ$'') are shown in \textbf{bold}. PAU is the best in top-1 accuracy and runner-up for top-5.\label{tab:results_imagenet}}
\end{table}

\subsection{Summarized Results}

Finally, we compare the PAU family of activation functions to all the other activation functions and we aggregate the number of occurrences where PAU performed better or worse in comparison. The aggregate results can be found in Tab.~\ref{tab:summary_actcomp_results_all}.
These results are the aggregates of all the experiments on all datasets and all architectures.
As we can see, the PAU family is very competitive.

\begin{table}[h]
\small
\setlength\tabcolsep{.55pt}
\centering
\begin{tabular}{l|c|c|c|c|c|c|c|c|c|c|c|c}
Baselines & ReLU & ReLU6 & LReLU & RReLU & ELU & CELU & PReLU & Swish & Maxout & Mixture & APL & SReLU \\
\hline
PAU/RPAU >= Baseline  & 34 & 35 & 34 & 33 & 40 & 40 & 39 & 41 & 9 & 20 & 32 & 33\\
PAU/RPAU < Baseline   &  8 &  7 &  8 &  9 &  2 &  2 &  3 &  1 & 6 &  0 &  7 &  8\\
\end{tabular}
\caption{The number of models on which PAU and RPAU outperforms or underperforms each baseline activation function we compared against in our experiments.
    \label{tab:summary_actcomp_results_all}}
%mnist+fminst+cifar+imagenetAcc1+imagenetAcc5
%Baselines              & ReLU          & ReLU6       & LReLU         & RReLU       & ELU         & CELU        & PReLU     & Swish         & Maxout  & Mixture   & APL       & SReLU       \\
%PAU/RPAU >= Baseline   & 9+9+14+1+1    & 9+10+14+1+1 & 10+10+12+1+1  & 8+10+13+1+1 & 9+9+20+1+1  & 9+9+20+1+1  & 9+10+18+1+1 & 10+10+20+1+0  & x+x+9+x & x+x+20+x  & 6+7+19+   & 9+10+13+1   \\
%PAU/RPAU < Baseline    & 1+1+6+0+0     & 1+0+6+0+0   & 0+0+8+0+0     & 2+0+7+0+0   & 1+1+0+0     & 1+1+0+0+0   & 1+0+2+0+0   & 0+0+0+0+1     & x+x+6+x & x+x+0+x   & 4+3+0+    & 1+0+7+0     \\
\end{table}

To summarize, PAUs satisfy all the assumptions (i-v). They allow the network to be universal function approximators (i) as shown by theorem~\ref{thm:theorem1}. They present a fast and stable learning behavior (ii, iii) as shown in Figs.~(\ref{fig:results_imagenet},\ref{fig:mnist_full_results},\ref{fig:fmnist_full_results},\ref{fig:cifar10_results}). The number of parameters introduced by PAUs is minimal in comparison to the size of the networks. In our experiments, we add 10 parameters per layer, showing that PAUs are parsimonious (iv). Finally, they allow deep neural networks to provide high predictive performance (v) as shown in Tab.~\ref{tab:summary_actcomp_results_all}.

\section{Conclusions}
We have presented a novel learnable activation function, called Padé Activation Unit (PAU). PAUs encode activation functions as rational functions, trainable in an end-to-end fashion using backpropagation. This 
makes it easy for practitioners to replace standard activation functions with PAU units in any neural network. The results of our empirical evaluation for image classification demonstrate that PAUs can indeed learn new activation functions and in turn novel neural networks that are competitive to state-of-the-art networks with fixed and learned activation functions. Actually, across all activation functions and architectures, Padé networks are among the top performing networks. This clearly shows that the reliance on first picking fixed, hand-engineered activation functions can be eliminated and that learning activation functions is actually beneficial and simple. Moreover, our results provide the first empirically evidence that the open question ``Can rational functions be used to design algorithms for training neural networks?'' raised by \citet{telgarsky17} can be answered affirmatively for common deep architectures.  

Our work provides several interesting avenues for future work. One should explore more the space between safe and unsafe PAUs, in order to gain even more predictive power. Most interestingly, since Padé networks can be reduced to ReLU networks. one should explore globally optimal training \citep{arora2018understanding} as well as provable robustness \citep{croceA019} of  Padé approximations of general deep networks.

\paragraph{Acknowledgments.} PS and KK were supported by funds of the German Federal Ministry of Food and Agriculture (BMEL) based on a decision of the Parliament of the Federal Republic of Germany via the Federal Office for Agriculture and Food (BLE) under the innovation support program, project ``DePhenS'' (FKZ 2818204715).

\clearpage

\bibliography{refs}
 
\clearpage

\appendix
\section{Appendix}

\subsection{Initialization coefficients}
\label{ax:initial_coefficients}

As show in Table~\ref{tab:pade_initialization} we compute initial coefficients for PAU approximations to different known activation functions. We predefined the orders to be [5,4] and for Sigmoid, Tanh and Swish, we have computed the Padé approximant using the standard techniques. 
For the different variants of PRelu, LeakyRelu and Relu we optimized the coefficients using least squares over the line range between [-3,3] in steps of 0.000001.

\begin{table}[ht]
\centering
\resizebox{\textwidth}{!}{%
\begin{tabular}{rrrrrrrrrr}
\hline
 & \textbf{Sigmoid} & \textbf{Tanh} & \textbf{Swish} & \textbf{ReLU} &\textbf{LReLU(0.01)} & \textbf{LReLU(0.20)} & \textbf{LReLU(0.25)} & \textbf{LReLU(0.30)} & \textbf{LReLU(-0.5)} \\ \hline\hline
$a_0$ & $1/2$ & 0 & 0                   & 0.02996348 & 0.02979246 & 0.02557776 & 0.02423485 & 0.02282366 & 0.02650441 \\
$a_1$ & $1/4$ & 1 & $1/2$               & 0.61690165 & 0.61837738 & 0.66182815 & 0.67709718 & 0.69358438 & 0.80772912 \\
$a_2$ & $1/18$ & 0 & $b/4$              & 2.37539147 & 2.32335207 & 1.58182975 & 1.43858363 & 1.30847432 & 13.56611639 \\
$a_3$ & $1/144$ & $1/9$ & $3b^2/56$     & 3.06608078 & 3.05202660 & 2.94478759 & 2.95497990 & 2.97681599 & 7.00217900 \\
$a_4$ & $1/2016$ & 0 & $b^3/168$        & 1.52474449 & 1.48548002 & 0.95287794 & 0.85679722 & 0.77165297 & 11.61477781 \\
$a_5$ & $1/60480$ & $1/945$ & $b^4/3360$& 0.25281987 & 0.25103717 & 0.23319681 & 0.23229612 & 0.23252265 & 0.68720375 \\
$b_1$ & 0 & 0 & 0                       & 1.19160814 & 1.14201226 & 0.50962605 & 0.41014746 & 0.32849543 & 13.70648993 \\
$b_2$ & $1/9$ & $4/9$ & $3b^2/28$       & 4.40811795 & 4.39322834 & 4.18376890 & 4.14691964 & 4.11557902 & 6.07781733 \\
$b_3$ & 0 & 0 & 0                       & 0.91111034 & 0.87154450 & 0.37832090 & 0.30292546 & 0.24155603 & 12.32535229 \\
$b_4$ & $1/10008$ & $1/63$ & $b^4/1680$ & 0.34885983 & 0.34720652 & 0.32407314 & 0.32002850 & 0.31659365 & 0.54006880 \\ \hline
\end{tabular}%
}
\caption{Initial coefficients to approximate different activation functions.}
\label{tab:pade_initialization}
\end{table}

\subsection{List of Activation Functions}
\label{ax:activation_functions}

For our experiments, we compare against the following activation functions with their respective parameters. 

\begin{itemize}
    \item {\bf ReLU} \citep{nair2010rectified}:
        $y = \text{max}(x, 0) \nonumber$
    \item {\bf ReLU6} \citep{krizhevsky2010convolutional}: 
        $y = \text{min}(\text{max}(x, 0), 6) \nonumber,$
    a variation of ReLU with an upper bound.
    \item {\bf Leaky ReLU} \citep{maas2013rectifier}: 
        $y=\text{max}(0,x)+\alpha*\text{min}(0,x)\nonumber$
    with the negative slope, which is defined by the parameter $\alpha$. Leaky ReLU enables a small amount of information to flow when $x < 0$.
    \item {\bf Random ReLU} \citep{xu2015empirical}: a randomized variation of Leaky ReLU.
    \item {\bf ELU} \citep{clevert2015fast}: $y=\text{max}(0,x)+\text{min}(0,\alpha*(\text{exp}(x)-1))$.
    \item {\bf CELU} \citep{barron2017continuously}: $y=\text{max}(0,x)+\text{min}(0,\alpha*(\text{exp}(x/\alpha)-1))$.
    %\item {\bf Tanh}: 
    %  $  y = \text{sinh}(x)/\text{cosh}(x) = ((\text{exp}(x) - \text{exp}(-x))/(\text{exp}(x) + \text{exp}(-x))) \nonumber$
    \item {\bf Swish}
    \citep{ramachandran2017searching}:
        $y = x * \text{sigmoid}(x) \ , \nonumber$
    which tends to work better than ReLU on deeper models across a number of challenging datasets.
    \item {\bf Parametric ReLU} (PReLU) \citep{he2015delving}
        $y = \text{max}(0,x)+\alpha*\text{min}(0,x) \ , \nonumber$
    where the leaky parameter $\alpha$ is a learn-able parameter of the network. 
    \item {\bf Maxout} \citep{goodfellow2013maxout}: $y = \text{max}(z_{ij})$, where $z_{ij} = x^T W_{\dots ij} + b_{ij}$ , and $W \in R^{d\times m \times k}$ and $b \in R^{m \times k}$ are learned parameters.
    \item {\bf Mixture of activations} \citep{manessi2018learning}: a combination of weighted activation functions \textit{e.g.} \{id, ReLU\}, where the weight is a learnable parameter of the network.
    \item {\bf SLAF} \citep{goyal2019learning}: a learnable activation function based on a Taylor approximation.
    \item {\bf APL} \citep{AgostinelliHSB14}: a learnable piecewise linear activation function. 
    \item {\bf SReLU} \citep{jin2016deep}: a learnable S-shaped rectified linear activation function.
\end{itemize}

\subsection{Details of the MNIST and Fashion-MNIST Experiment}
\label{ax:mnist_experiments}

\subsubsection{Network Architectures}
Here we describe the architectures for the networks VGG and LeNet, along with the number of trainable parameters. The number of parameters of the activation function is reported for using PAU. Common not trainable activation functions do not have trainable parameters. PReLU has one trainable parameter. In total the VGG network as 9224508 parameters with 50 for PAU, and the LeNet network has 61746 parameters with 40 for PAU.

\begin{table*}[ht]
    \centering
    {\def\arraystretch{1.6}\tabcolsep=5pt
    \strutlongstacks{T}
    \begin{tabular}{c|lr|lr|}
        \hline
        \textbf{No.} & \textbf{VGG} & \# params & \textbf{LeNet} & \# params\\ \hline\hline
        1&\Centerstack[l]{Convolutional \\ 3x3x64}&640&\Centerstack[l]{Convolutional \\ 5x5x6}&156 \\ \hline
        2&Activation&10&Activation&10\\ \hline
        3&Max-Pooling&0&Max-Pooling&0\\\hline
        4&\Centerstack[l]{Convolutional\\3x3x128}&73856&\Centerstack[l]{Convolutional\\ 5x5x16}&2416\\ \hline
        5&Activation&10&Activation&10\\ \hline
        6&Max-Pooling&0&Max-Pooling&0\\ \hline
        7&\Centerstack[l]{Convolutional \\3x3x256}&295168&\Centerstack[l]{Convolutional \\5x5x120}&48120 \\ \hline
        8&\Centerstack[l]{Convolutional \\3x3x256}&590080&Activation&10\\ \hline
        9&Activation&10&\Centerstack[l]{Linear \\ 84}&10164 \\ \hline
        10&Max-Pooling&0&Activation&10\\ \hline
        11&\Centerstack[l]{Convolutional \\3x3x512}&1180160&\Centerstack[l]{Linear \\ 10}&850\\ \hline
        12&\Centerstack[l]{Convolutional \\3x3x512}&2359808&Softmax&0 \\ \hline
        13&Activation&10&&\\ \hline
        14&Max-Pooling&0&& \\ \hline
        15&\Centerstack[l]{Convolutional \\3x3x512}&2359808&& \\ \hline
        16&\Centerstack[l]{Convolutional \\3x3x512}&2359808&& \\ \hline
        17&Activation&10&& \\ \hline
        18&Max-Pooling&0&&\\ \hline
        19&\Centerstack[l]{Linear \\ 10}&5130&&\\ \hline
        20&Softmax&0&&\\ \hline
    \end{tabular}
    }
    \caption{Architecture of Simple Convolutional Neural Networks \label{tab:mnist_conv_arch}}
 \end{table*}

\subsubsection{Learning Parameters}
The parameters of the networks, both the layer weights and the coefficients of the PAUs, were trained over 100 epochs using Adam \citep{kingma2014adam} with a learning rate of $0.002$ or SGD \citep{qian1999momentum} with a learning rate of $0.01$, momentum set to $0.5$, and without weight decay. In all experiments we used a batch size of $256$ samples.
The weights of the networks were initialized randomly and the coefficients of the PAUs were initialized with the initialization constants of Leaky ReLU, see Tab.~\ref{tab:pade_initialization}. We report the mean of 5 different runs for both the accuracy on the test-set and the loss on the train-set after each training epoch. 

\clearpage

\subsubsection{Predictive Performance}

\begin{figure}[ht]
\textbf{MNIST}\par\medskip
	\begin{center}
		\includegraphics[width=1.\textwidth]{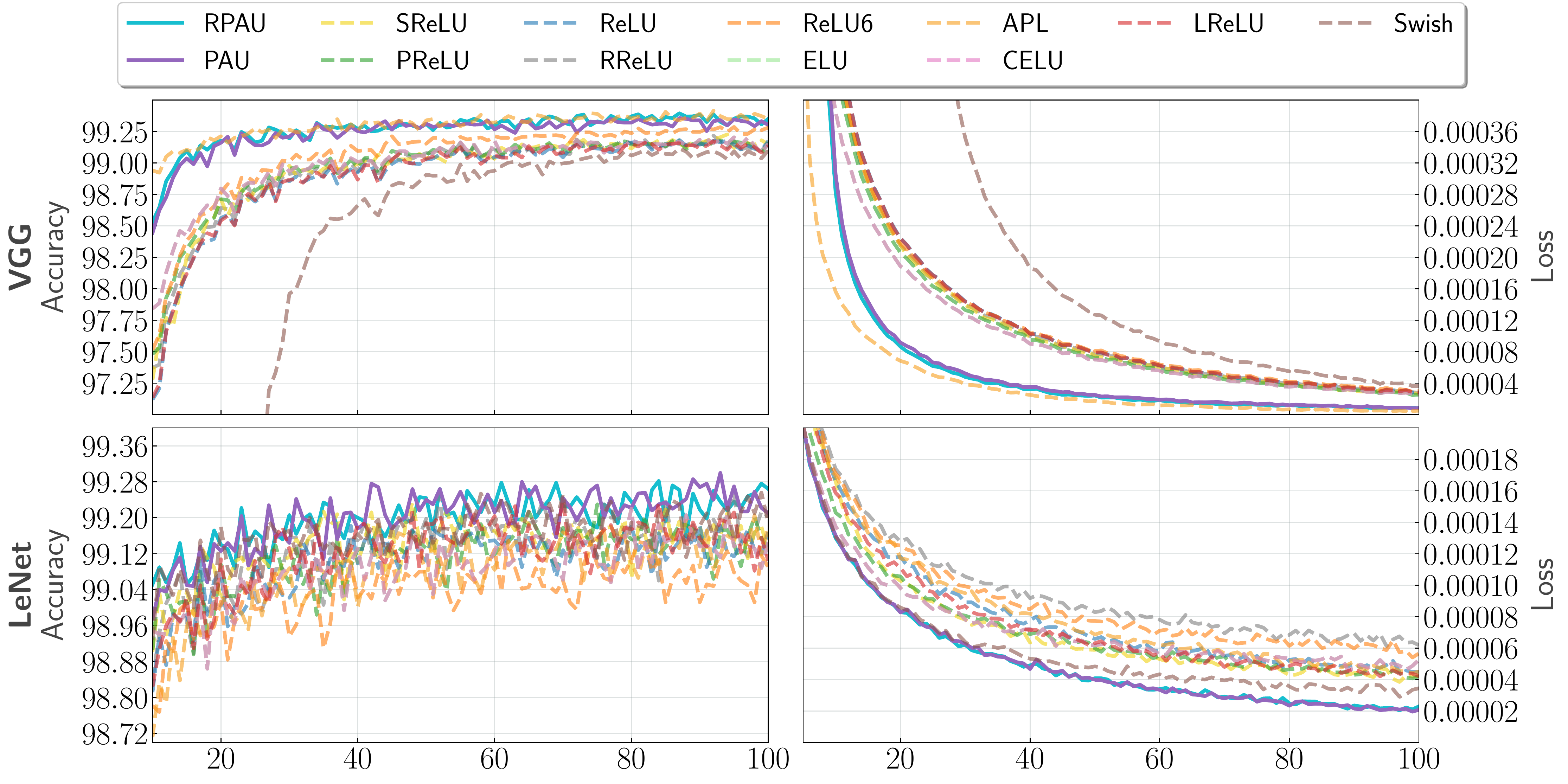}
	\end{center}
	\caption{PAU compared to baseline activation function units on 5 runs of MNIST using the VGG and LeNet: first column mean test-accuracy, second column mean train-loss. PAU consistently outperforms or matches the best performances of the baseline activations. Moreover, PAUs enable the networks to achieve a lower loss during training compared to all baselines. \label{fig:mnist_full_results}}
\end{figure}
\begin{figure}[ht]
\textbf{Fashion-MNIST}\par\medskip
	\begin{center}
		\includegraphics[width=1.\textwidth]{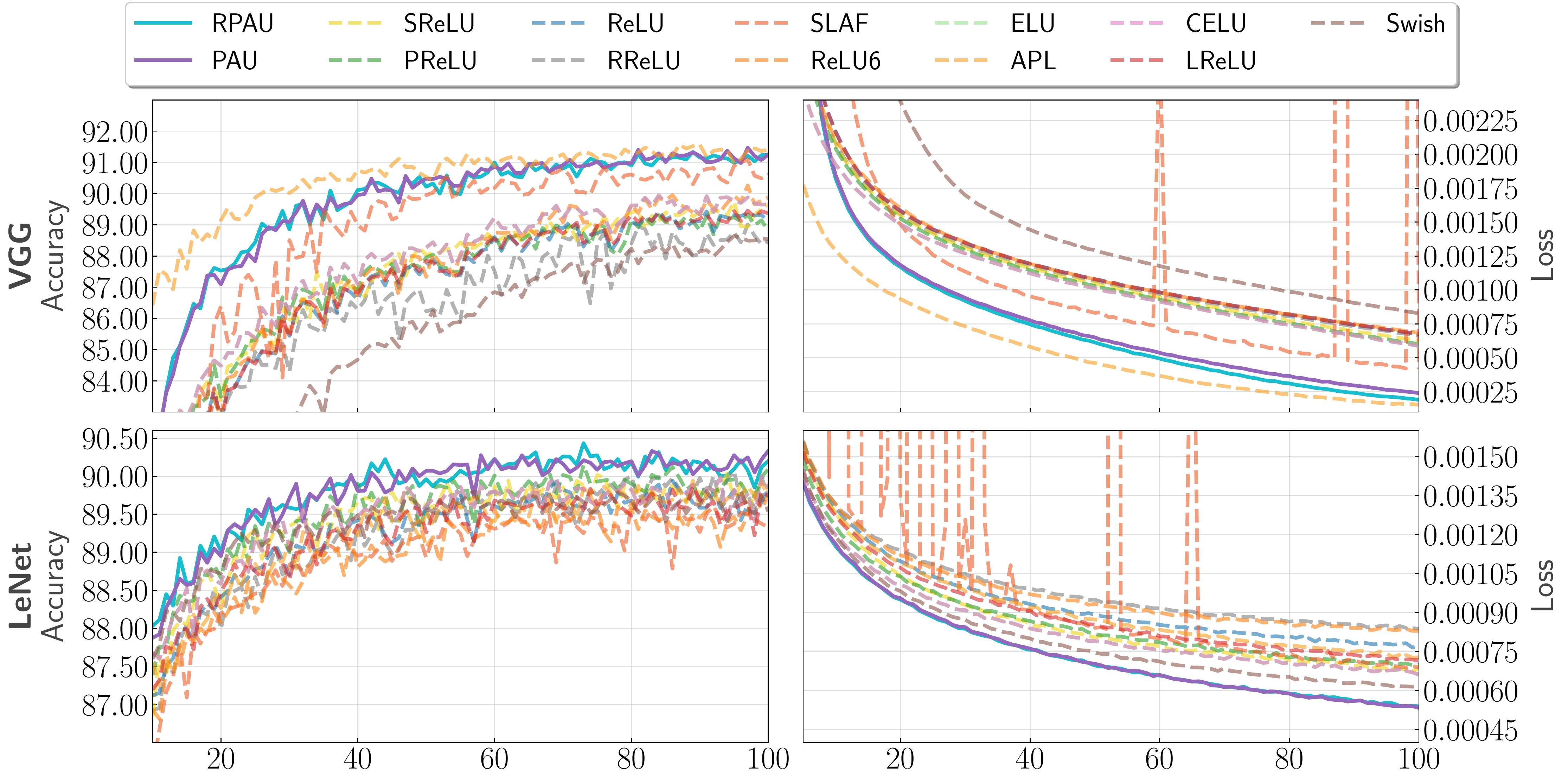}
	\end{center}
	\caption{PAU compared to baseline activation function units on 5 runs of Fashion-MNIST using the VGG and LeNet architectures: first column mean test-accuracy, second column mean train-loss. PAU consistently outperforms the baselines activation functions in terms of performance and training time, especially on the VGG. \label{fig:fmnist_full_results}}
\end{figure}

%\subsection{Illustration of the Activation Functions learned by PAUs}
%When looking at the activation functions learned from the data, we can clearly see similarities to standard functions. 
%In particular, we can see in Fig.~\ref{fig:learned_activations_mnist} and Fig.~\ref{fig:learned_activations_fmnist} that many learned activations seem to be smoothed versions of Leaky ReLUs. Consider that piecewise V-shaped activation function are simply Leaky ReLUs with negative $\alpha$ values.
%Furthermore, we see that unlike the standard activations, PAUs also choose to move the center of what would be the the piecewise transition in the ReLU family. As shown in Fig.~\ref{fig:learned_activations_mnist}j, the transition between the two ''linear'' modes is negative. However, this can only be achieved by a Leaky ReLU with a negative $\alpha$ and then applying a shifting to the right, a non-standard procedure.
%\begin{figure}[h]
%	\centering
%	 \begin{subfigure}{
%        \includegraphics[width=\linewidth]{}}
%	 \end{subfigure}
%	\caption{Resulting activation functions after training the networks VGG, LeNet and CONV with PAU on MNIST. \label{fig:learned_activations_mnist}}
%\end{figure}
%\newpage
%\begin{figure}[h]
%	 \begin{subfigure}{
%        \includegraphics[width=\linewidth]{}}
%	 \end{subfigure}
%	\caption{Resulting activation functions after training the networks VGG, LeNet and CONV with PAU on Fashion-MNIST. \label{fig:learned_activations_fmnist}}
%\end{figure}

%\clearpage
\subsection{Details of the Cifar10 and Imagenet Experiment}
\label{ax:cifar10_imagenet_experiments}
\subsubsection{Learning Parameters}
The parameters of the networks, both the layer weights and the coefficients of the PAUs, were trained over 400 epochs using SGD with momentum set to $0.9$. On the Cifar10 dataset we have different optimizer setups for PAU layers and the rest of the network. For the PAU layers we use constant learning rates per networks and no weight decay. For updating the rest of the network we use initial learning rate of $0.1$, and learning rate decay of $0.985$ per epoch and set weight decay to $5e-4$. In all experiments we used a batch size of $64$ samples. The weights of the networks were initialized randomly and the coefficients of the PAUs were initialized with the initialization constants of Leaky ReLU, see Tab. \ref{ax:initial_coefficients}. The additive noise of the Randomized PAUs is set to $\alpha=1\%$ training the networks VGG8 and MobileNetV2, respectively $\alpha=10\%$ for ResNet101.

On the Imagenet dataset we use the same optimizer for PAU and the rest of the network. We follow the default setup provided by Pytorch and use an initial learning rate of $0.1$, and decay the learning rate by 10\% after 30, 60 and 90 epochs.

\subsubsection{Network Architectures}
The network architectures were taken from reference implementations in PyTorch and we modified them to use PAUs. All architectures are the same among the different activation functions except for Maxout. The default amount of trainable parameters of VGG8 \citep{simonyanZ14a} is 3,918,858. Using PAU 50 additional parameters are introduced. Maxout is extending the VGG8 network to a total number of 7,829,642 parameters. MobileNetV2 \citep{sandler2018mobilenetv2} is contains by default 2,296,922 trainable parameters. PAU adds 360 additional parameters. The Maxout activation function is results in a total number of 3,524,506 parameters. With respect to the number of parameters ResNet101 \citep{he2016deep}  is the largest network we train. By default it contains 42,512,970 trainable parameters, we introduce 100 PAUs and therefore add 1000 additional parameters to the network. If one is replacing each activation function using Maxout the resulting ResNet101 network contains 75,454,090 trainable parameters. The default DenseNet121 \citep{huang2017densely} network has 6,956,298 parameters. Replacing the activation functions with PAU adds 1200 parameters to the network.

\subsubsection{Pruning Experiment}
\label{ax:pruning}

For the pruning experiment, we implement the "Lottery ticket hypothesis" (\cite{FrankleC19}) in PyTorch.
We compare PAUs against the best activation for the network architecture according to the average predictive accuracy from Tab.~\ref{tab:results_cifar10}.
More precisely, we compare the predictive performance under pruning for the networks $N_1 = \{\text{VGG-8}_{\text{pau}}, \text{MobileNetV2}_{\text{pau}}, \text{ResNet101}_{\text{pau}}\}$ against the networks $N_2 = \{\text{VGG-8}_{\text{LReLU}}, \text{MobileNetV2}_{\text{RReLU}}, \text{ResNet101}_{\text{pau}}\}$.
Here we avoided Maxout as it heavily increases the parameters in the model, defeating the purpose of pruning.
Unlike the original paper, we compress the convolutions using a fixed pruning parameter per iteration of $p\% = {10, 20, 30, 40, 50, 60}$ and evaluated once per network.
After each training iteration we remove $p\%$ of filters in every convolution and the filters we remove are the ones where the sum of its weights is lowest. After pruning, we proceed to re-initialize the network and repeat the training and pruning proceedure with the next $p\%$ parameter.

\clearpage

\subsubsection{Predictive Performance Cifar10}

\begin{figure}[ht]
	\begin{center}
		\includegraphics[width=1.\textwidth]{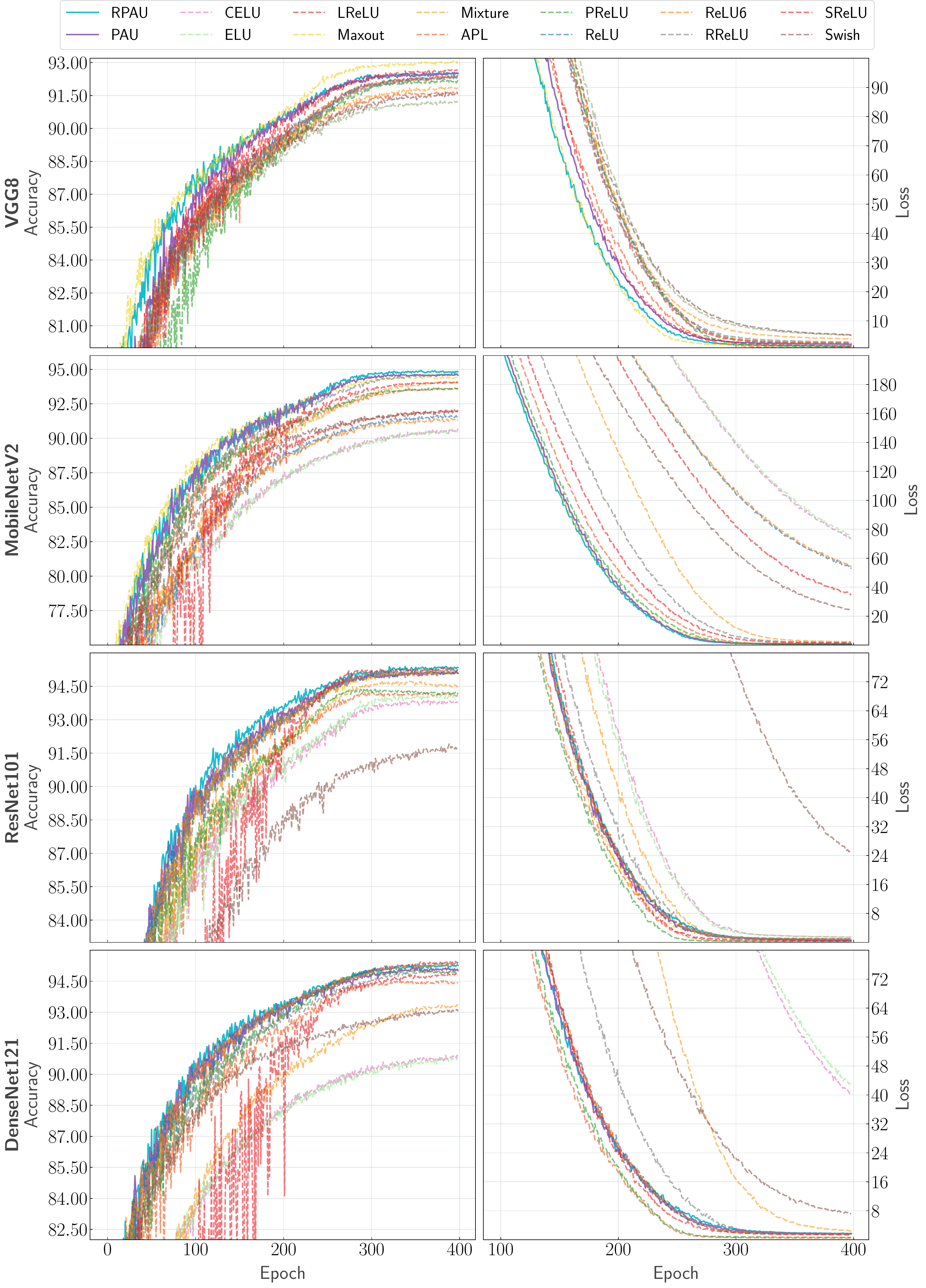}
	\end{center}
	\caption{PAU compared to baseline activation function units on 5 runs of CIFAR-10. Accuracy on the left column and loss on the right one. (Best viewed in color)\label{fig:cifar10_results}}
\end{figure}

% \begin{figure}[t]
% 	\begin{center}
% 		\includegraphics[width=1.\textwidth]{}
% 	\end{center}
% 	\caption{\todo{describe} The results on the remaining experiments can be found in the appendix. \todo{add link} \label{fig:cifar10_results}}
% \end{figure}

% \begin{figure}[t]
% 	\begin{center}
% 		\includegraphics[width=1.\textwidth]{images/experiments/imagenet/logs_mean_mobilenetv2.pdf}
% 	\end{center}
% 	\caption{\todo{describe} The results on the remaining experiments can be found in the appendix. \todo{add link} \label{fig:imagenet_results}}
% \end{figure}

\end{document}